\newtheorem{theorem}{Theorem}
\newtheorem{proposition}{Proposition}
\newtheorem{corollary}{Corollary}
\newcommand{\N}{\mathcal{N}}
\newcommand{\States}{\mathcal{S}}
\newcommand{\Actions}{\mathcal{A}}
\newcommand{\Probs}{\mathcal{P}}
\newcommand{\law}{\mathcal{L}}
\newcommand{\R}{\mathbb{R}}
\newcommand{\E}{\mathbb{E}}
\newcommand{\Tr}{\mathrm{Tr}}
\newcommand{\ADM}{\mathrm{ADM}}
\newcommand{\aff}{\mathrm{aff}}
\newcommand{\real}{\mathrm{real}}
\newcommand{\simu}{\mathrm{sim}}
\newcommand{\OT}{\mathrm{OT}}
\newcommand{\defeq}{\overset{\Delta}{=}}
\DeclareMathOperator*{\argmin}{arg\,min}
\title{\LARGE \bf
Affine Transport for Sim-to-Real Domain Adaptation}
\author{Anton Mallasto$^{*,1}$ \medskip Karol Arndt$^{*,2}$ \medskip Markus Heinonen$^1$ \medskip Samuel Kaski$^{1,3}$ \medskip Ville Kyrki$^2$
\thanks{\textbf{Preprint}. Work in progress.}
\thanks{$^*$Equal Contribution.}
\thanks{This work was supported by Academy of Finland grants 317020 and 328399. We acknowledge the computational resources provided by the Aalto Science-IT project.
}%
\thanks{$^{1}$Department of Computer         Science, Aalto University, Espoo, Finland}%
\thanks{$^{2}$Intelligent Robotics Group,Department of Electrical Engineering and Automation, Aalto University, Espoo, Finland}%
\thanks{$^{3}$Department of Computer Science, University of Manchester, Manchester, UK
}
\thanks{{\tt\small first.last@aalto.fi}}
}
\begin{document}

\maketitle
\thispagestyle{empty}
\pagestyle{empty}

\begin{abstract}
Sample-efficient domain adaptation is an open problem in robotics.
In this paper, we present affine transport---a variant of optimal transport, which models the mapping between state transition distributions between the source and target domains with an affine transformation.
First, we derive the affine transport framework; then, we extend the basic framework with Procrustes alignment to model arbitrary affine transformations.
We evaluate the method in a number of OpenAI Gym sim-to-sim experiments with simulation environments, as well as on a sim-to-real domain adaptation task of a robot hitting a hockeypuck such that it slides and stops at a target position.
In each experiment, we evaluate the results when transferring between each pair of dynamics domains.
The results show that affine transport can significantly reduce the model adaptation error in comparison to using the original, non-adapted dynamics model.
\end{abstract}

\maketitle

The recent success of reinforcement learning (RL) has inspired its increasing adoption in robotics~\cite{kober2013reinforcement}. Training RL agents on real robotics data, though, is expensive, due to wear and tear, and slow running time. Instead, policies are usually trained in a simulated environment, which is considerably cheaper and safer. Unfortunately, the simulated environments cannot model the real life perfectly, e.g., the sensory data might differ, as well as the dynamics of the system. To combat this gap, sim-to-real transfer attempts to adapt an agent trained in the simulation by utilizing real world data.

Domain adaptation~\cite{wilson2020survey,redko2020survey} is a popular tool for sim-to-real transfer, where a model trained on a \emph{source domain} (cf. simulation) is transferred to a \emph{target domain} (cf. real robot) with less data available~\cite{zhao2020sim}. Recently, optimal transport (OT) has become an increasingly popular tool in domain adaptation~\cite{courty17DA,courty2017joint,redko2017theoretical}.

Optimal transport provides a geometrical framework for the study of probability measures by lifting a \emph{cost function} between samples to a divergence between two distributions. This lift is carried out by learning a \emph{transport plan} (a joint distribution of the two marginals) that minimizes the total cost of transporting the mass of one distribution to another distribution. The transport plan can then be used to e.g., map one distribution to another, and to construct a relation between samples of the two distributions.

Use of OT in domain adaptation relies on the constructed relation: if a distribution (source domain) changes into another (target domain) with minimal cost, this relation tells us how individual samples were changed. For example, a transport plan between discrete distributions can be computed, which is used to map source samples to target through \emph{barycentric projection}, as done by~\cite{courty17DA}. The downfall here, is that to map novel samples, an optimization problem should be solved. On the other hand, one could directly learn a \emph{transport map} between the distributions~\cite{seguy2017large}, which is closer to the approach taken in this paper.

We propose the \emph{affine transport} (AT) framework, which simplifies the OT framework, by considering OT between the normal approximations of two distributions. This way, we can compute the transport map cheaply in closed-form, which can readily be applied to novel samples. AT is closely related to correlation alignment approaches in domain adaptation, such as CORAL~\cite{sun2015return}, where a linear transformation is computed to minimize the Frobenius norm between the covariance matrices. Effectively this transformation maps the two normal approximations to each other, but not with minimal total transportation cost, as AT does. Zhang et al.~\cite{zhang2018aligning} take one step further, and carry out the covariance alignment in a reproducing kernel Hilbert space (RKHS) using the OT approach. Our work takes this approach to the sim-to-real setting. Furthermore, the present work builds on~\cite{zhang2018aligning}, by formalizing the approach as AT, assessing when AT and OT are equivalent, and furthermore providing error bounds for how close our transferred source distribution is to the target distribution. Based on the theoretical results, we also provide a score function, describing how close the two domains are to differing affinely from each other.

The contributions can be summarized as follows:
\begin{itemize}
    \item We formalize optimal transport based correlation alignment as affine transport, and provide theoretical results on when OT and AT are equivalent, and provide upper bounds on their discrepancy.
    \item We apply AT in sim-to-real tasks, showing clear improvement over situations where no transfer is carried out. 
\end{itemize}

\section{Background}\label{sec:background}
\textbf{Optimal Transport.} Let $(X,d)$ be a metric space equipped with a lower semi-continuous \emph{cost function} $c:X\times X \to \mathbb{R}_{\geq 0}$, e.g the Euclidean distance $c(x,y) = \|x-y\|$. Then, the optimal transport (OT) problem between two probability measures $\nu_0, \nu_1 \in \Probs(X)$ is given by the \emph{Kantorovich problem}
\begin{equation}\label{eq:kantorovich}
    \OT_c(\nu_0, \nu_1) = \min_{\gamma\in \ADM(\nu_0,\nu_1)}\E_\gamma[c],
\end{equation}
where $\ADM(\nu_0,\nu_1)$ is the set of joint probabilities with marginals $\nu_0$ and $\nu_1$, and $\E_\nu[f]$ denotes the expected value of $f$ under $\nu$. The optimal $\gamma$ minimizing \eqref{eq:kantorovich} is called the \emph{OT plan}. Denote by $\law(X)$ the law of a random variable $X$. Then, the OT problem extends to random variables $X,Y$ as
\begin{equation}
    \OT_c(X,Y) \defeq \OT_c(\law(X), \law(Y)).
\end{equation}

Assuming that either of the considered measures are \emph{absolutely continuous}, then the Kantorovich problem is equivalent to the \emph{Monge problem}
\begin{equation}
    \OT_c(\mu,\nu) = \min\limits_{T: T_\#\mu = \nu} \E_{X\sim \mu}[c(X,T(X))],
\end{equation}
where the minimizing $T$ is called the \emph{OT map}, and $T_\#\mu$ denotes the \emph{push-forward measure}, which is equivalent to the \emph{law} of $T(X)$, where $X\sim \mu$.  

\textbf{Wasserstein distance.} Let $X$ be a random variable over $\R^d$ satisfying $\E[\|X-x_0\|^2]<\infty$ for some $x_0\in \R^d$, and thus for any $x\in \R^d$. We denote this class of random variables by $\Probs_2(\R^d)$. Then, the $2$-Wasserstein distance $W_2$ between $X,Y\in \Probs_2(\R^d)$ is defined as
\begin{equation}
    W_2(X,Y) = \OT_{d^2}(X, Y)^{\frac{1}{2}},
\end{equation}
where $d(x,y) = \|x-y\|$. The OT maps in this case are characterized by the following theorem
\begin{theorem}[Knott-Smith Optimality Criterion~\cite{smith1987note}]\label{thm:OT_maps}
Let $X\in \Probs_2(\R^d)$.  Furthermore, assume $T(x) = \nabla \phi(x)$ for a convex function and that $T(X)\in\Probs_2(\R^d)$. Then $T$ is the unique OT map between $\mu$ and $T_\#\mu$.
\end{theorem}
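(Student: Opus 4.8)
\emph{Proof strategy.} The plan is to exhibit a coupling that attains the Kantorovich optimum for the quadratic cost $c=d^2$, which by definition of $W_2$ is exactly what it means for $T$ to be \emph{the} OT map between $\mu:=\law(X)$ and $\nu:=T_\#\mu$. Let $\gamma_T$ be the law of $(X,T(X))$ with $X\sim\mu$; it is admissible, i.e.\ $\gamma_T\in\ADM(\mu,\nu)$. First I would reduce to a correlation-maximization problem: since $\|x-y\|^2=\|x\|^2-2\langle x,y\rangle+\|y\|^2$ and the outer two terms integrate against any $\gamma\in\ADM(\mu,\nu)$ to constants determined by the marginals alone (finite because $X,T(X)\in\Probs_2(\R^d)$), minimizing $\E_\gamma[\|x-y\|^2]$ is the same as maximizing $\E_\gamma[\langle x,y\rangle]$.

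Next I would bring in convex duality via the Fenchel--Young inequality. Writing $\phi^*$ for the Legendre--Fenchel conjugate of $\phi$, one has $\langle x,y\rangle\le\phi(x)+\phi^*(y)$ for all $x,y\in\R^d$, with equality exactly when $y\in\partial\phi(x)$. Because $T(x)=\nabla\phi(x)$ and a convex function satisfies $\partial\phi(x)=\{\nabla\phi(x)\}$ at every point of differentiability, equality holds $\mu$-a.e.\ along the graph of $T$. Hence for every $\gamma\in\ADM(\mu,\nu)$,
\[
\E_\gamma[\langle x,y\rangle]\ \le\ \E_\mu[\phi]+\E_\nu[\phi^*],
\]
while for the candidate plan
\[
\E_{\gamma_T}[\langle x,y\rangle]=\E_\mu\big[\langle x,\nabla\phi(x)\rangle\big]=\E_\mu\big[\phi(x)+\phi^*(\nabla\phi(x))\big]=\E_\mu[\phi]+\E_\nu[\phi^*].
\]
Thus $\gamma_T$ meets the bound and is an optimal plan, so $T$ is an OT map, and the pair $(\phi,\phi^*)$ is optimal for the Kantorovich dual.

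For uniqueness I would invoke complementary slackness: if $\gamma$ is any optimal plan then $\int\!\big(\phi(x)+\phi^*(y)-\langle x,y\rangle\big)\,d\gamma=\big(\E_\mu[\phi]+\E_\nu[\phi^*]\big)-\E_\gamma[\langle x,y\rangle]=0$ with nonnegative integrand, so $y\in\partial\phi(x)$ for $\gamma$-a.e.\ $(x,y)$. Applying this to $\gamma_S:=(\mathrm{id},S)_\#\mu$ for a competing OT map $S$ and again using $\partial\phi(x)=\{\nabla\phi(x)\}=\{T(x)\}$ at points of differentiability, we conclude $S(x)=T(x)$ for $\mu$-almost every $x$.

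The main obstacle is the integrability bookkeeping underlying both the splitting $\E_\gamma[\phi(x)+\phi^*(y)]=\E_\mu[\phi]+\E_\nu[\phi^*]$ and the absence of a duality gap, since a priori $\phi,\phi^*$ are merely convex and possibly not finite everywhere. I would handle this by noting that $\mu$-a.e.\ existence of $\nabla\phi$ confines $\phi$ to a convex, full-$\mu$-measure set on which it has at most quadratic growth, and symmetrically for $\phi^*$ relative to $\nu$, so the finite-second-moment hypotheses make all the relevant integrals finite; an alternative is to prove the claim first for compactly supported $\mu,\nu$ (equivalently, bounded $\nabla\phi$) and pass to the limit, or to invoke directly the classical fact that any plan concentrated on a $c$-cyclically monotone set is optimal for the quadratic cost---the set $\{(x,\nabla\phi(x))\}$ being $c$-cyclically monotone precisely because $\phi$ is convex.
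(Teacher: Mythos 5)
Your argument is correct, and it is the classical proof of this result. Note that the paper itself does not prove Theorem~\ref{thm:OT_maps}: it is quoted as background from the cited literature, so there is no in-paper proof to compare against. Your route---reduce quadratic cost to correlation maximization, certify optimality of $(\mathrm{id},\nabla\phi)_\#\mu$ via the Fenchel--Young inequality and the dual pair $(\phi,\phi^*)$, then get uniqueness from complementary slackness forcing any optimal plan onto the graph of $\partial\phi=\{\nabla\phi\}$ at points of differentiability---is exactly the standard Knott--Smith/Brenier argument. The only imprecise spot is your integrability remark: $\mu$-a.e.\ differentiability of $\phi$ does \emph{not} give quadratic growth of $\phi$ (consider $\phi(x)=e^{x}$). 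The cleaner fix is the convexity (gradient) inequality $\phi(0)\geq\phi(x)+\langle\nabla\phi(x),-x\rangle$, which yields $\phi(x)\leq\phi(0)+\|\nabla\phi(x)\|\,\|x\|$; together with the affine lower bound $\phi(x)\geq\phi(0)+\langle\nabla\phi(0),x\rangle$ and Cauchy--Schwarz, the hypotheses $X,T(X)\in\Probs_2(\R^d)$ make $\E_\mu[\phi]$ finite, and then $\E_\nu[\phi^*]=\E_\mu[\langle x,\nabla\phi(x)\rangle]-\E_\mu[\phi]$ is finite as well, so the splitting and the duality argument go through. Your two fallback options (compact approximation, or optimality of plans concentrated on $c$-cyclically monotone sets) are also valid ways to close this.
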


When working with the $2$-Wasserstein case, we can without loss of generality assume centered distributions, as
\begin{equation}\label{eq:wasserstein_split}
\begin{aligned}
    W_2^2(X,Y) =& W_2^2(X-\E[X], Y-\E[Y])\\
    &+\|\E[X]-\E[Y]\|^2,
\end{aligned}
\end{equation}
splitting the $2$-Wasserstein distance into two independent terms concerning the $L^2$ distance between the means and the $2$-Wasserstein distance between the centered measures. Furthermore, if we have an OT map $T'$ between $X-\E[X]$ and $Y-\E[Y]$, then
\begin{equation}\label{eq:linear_to_affine}
    T(x) = T'(x-\E[X]) + \E[Y],
\end{equation}
is the OT map between $X$ and $Y$.

One of the rare cases where the $2$-Wasserstein distance admits a closed form solution, is between two multivariate Gaussian distributions $\nu_i=\N(\mu_i,\Sigma_i)$, $i=1,2$ which is given by \cite{givens84,dowson82,olkin82,knott84}
\begin{equation}\label{eq:gaus_was_d}
\begin{aligned}
    W_2^2(\nu_1, \nu_2) =& \|\mu_1-\mu_2\|^2 + \Tr(\Sigma_1) + \Tr(\Sigma_2)\\
    &- 2 \Tr\left(\Sigma_2^\frac{1}{2} \Sigma_1 \Sigma_2^\frac{1}{2}\right)^\frac{1}{2}.
\end{aligned}
\end{equation}
Furthermore, the OT map between $\nu_1$ and $\nu_2$ is given by the affine map
\begin{equation}
\begin{aligned}
    F(x) = Ax + b,\quad A &= \Sigma_2^\frac{1}{2}\left(\Sigma_2^\frac{1}{2}\Sigma_1\Sigma_2^\frac{1}{2}\right)^{-\frac{1}{2}}\Sigma_2^\frac{1}{2},\\
    b &= \mu_2 - A\mu_1,
\end{aligned}
\end{equation}
where $\Sigma^\frac{1}{2}$ denotes the matrix square-root. When restricted to the set of Gaussians, the $2$-Wasserstein distance results from a Riemannian metric~\cite{takatsu11}. The Gaussians are special to the $2$-Wasserstein distance also due to the following theorem.

\begin{theorem}[Gelbrich bound~\cite{gelbrich1990formula}]\label{thm:gaussians_lower_bound_2was}
Let $X,Y\in \Probs_2(\R^d)$ and $N_X,N_Y$ be their normal approximations sharing the same means and covariances. Then,
\begin{equation}
    W_2(N_X,N_Y) \leq W_2(X,Y).
\end{equation}
\end{theorem}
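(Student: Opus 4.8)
The plan is to reduce the statement to a trace inequality for cross-covariance matrices. First I would remove the means. Write $\Sigma_1 = \mathrm{Cov}(X)$, $\Sigma_2 = \mathrm{Cov}(Y)$, so that $N_X = \N(\E[X],\Sigma_1)$ and $N_Y = \N(\E[Y],\Sigma_2)$. Applying the splitting identity \eqref{eq:wasserstein_split} both to the pair $(X,Y)$ and to the pair $(N_X,N_Y)$, and noting that $N_X-\E[X] = \N(0,\Sigma_1)$ is exactly the normal approximation of $X-\E[X]$ (and similarly for $Y$), the common summand $\|\E[X]-\E[Y]\|^2$ cancels. Hence it suffices to prove $W_2^2(\N(0,\Sigma_1),\N(0,\Sigma_2)) \le W_2^2(X,Y)$ under the assumption that $X,Y$ are centered.

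Second, I would rewrite the right-hand side in terms of couplings. For any $\gamma\in\ADM(\law(X),\law(Y))$, let $(X,Y)\sim\gamma$ and set $C_\gamma = \E_\gamma[XY^\top]$. Expanding the squared norm gives $\E_\gamma[\|X-Y\|^2] = \Tr(\Sigma_1) + \Tr(\Sigma_2) - 2\Tr(C_\gamma)$, so that $W_2^2(X,Y) = \Tr(\Sigma_1) + \Tr(\Sigma_2) - 2\sup_\gamma \Tr(C_\gamma)$. On the other hand, the closed form \eqref{eq:gaus_was_d} gives $W_2^2(\N(0,\Sigma_1),\N(0,\Sigma_2)) = \Tr(\Sigma_1) + \Tr(\Sigma_2) - 2\Tr\big((\Sigma_2^{1/2}\Sigma_1\Sigma_2^{1/2})^{1/2}\big)$. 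Thus the whole claim reduces to the single inequality
\[ \sup_{\gamma\in\ADM(\law(X),\law(Y))} \Tr(C_\gamma) \;\le\; \Tr\big((\Sigma_2^{1/2}\Sigma_1\Sigma_2^{1/2})^{1/2}\big). \]

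Third, I would bound $\Tr(C_\gamma)$ for a fixed coupling. The joint covariance $\begin{pmatrix}\Sigma_1 & C_\gamma\\ C_\gamma^\top & \Sigma_2\end{pmatrix}$ is positive semidefinite, so—assuming for the moment that $\Sigma_1,\Sigma_2$ are invertible, the general case following by replacing $\Sigma_i$ with $\Sigma_i+\epsilon I$ and letting $\epsilon\to0$—the Schur complement yields $\Sigma_1 \succeq C_\gamma\Sigma_2^{-1}C_\gamma^\top$. Setting $Q = \Sigma_1^{-1/2}C_\gamma\Sigma_2^{-1/2}$, this says $QQ^\top\preceq I$, i.e. all singular values of $Q$ are at most $1$. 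Since $\Tr(C_\gamma) = \Tr(\Sigma_1^{1/2}Q\Sigma_2^{1/2}) = \Tr\big((\Sigma_2^{1/2}\Sigma_1^{1/2})Q\big)$, von Neumann's trace inequality gives $\Tr(C_\gamma) \le \sum_i \sigma_i(\Sigma_2^{1/2}\Sigma_1^{1/2})\,\sigma_i(Q) \le \sum_i \sigma_i(\Sigma_2^{1/2}\Sigma_1^{1/2}) = \|\Sigma_2^{1/2}\Sigma_1^{1/2}\|_*$. Finally $\|\Sigma_2^{1/2}\Sigma_1^{1/2}\|_* = \Tr\big((\Sigma_1^{1/2}\Sigma_2\Sigma_1^{1/2})^{1/2}\big) = \Tr\big((\Sigma_2^{1/2}\Sigma_1\Sigma_2^{1/2})^{1/2}\big)$, the last equality holding because both symmetric matrices have spectrum $\{\lambda_i(\Sigma_1\Sigma_2)\}$. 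This is exactly the desired bound, completing the proof; one may further note that equality is attained by the Gaussian coupling induced by the OT map of \eqref{eq:gaus_was_d}, which conceptually explains its tightness.

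I expect the only genuine obstacle to be the matrix optimization in the third step: making the singular-value comparison rigorous (invoking von Neumann's trace inequality correctly) and handling rank-deficient covariances via the $\epsilon$-regularization limit. The remaining ingredients—cancellation of the mean term, the expansion of $\E_\gamma[\|X-Y\|^2]$, and substitution into \eqref{eq:gaus_was_d}—are routine.
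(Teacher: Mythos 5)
The paper does not prove this theorem; it imports it verbatim from Gelbrich's 1990 paper, so there is no in-paper argument to compare against. Your proof is correct and is essentially the standard derivation of the Gelbrich bound: reduce to centered variables via \eqref{eq:wasserstein_split}, write $\E_\gamma[\|X-Y\|^2]=\Tr(\Sigma_1)+\Tr(\Sigma_2)-2\Tr(C_\gamma)$ for any coupling, and bound $\Tr(C_\gamma)$ by $\Tr\bigl((\Sigma_2^{1/2}\Sigma_1\Sigma_2^{1/2})^{1/2}\bigr)$ using the positive semidefiniteness of the joint covariance. The three delicate steps are all handled correctly: the Schur complement gives $QQ^\top\preceq I$ so $\sigma_i(Q)\leq 1$; von Neumann's trace inequality then yields the nuclear-norm bound $\Tr(C_\gamma)\leq\|\Sigma_2^{1/2}\Sigma_1^{1/2}\|_*$, which equals the claimed trace because $\Sigma_1^{1/2}\Sigma_2\Sigma_1^{1/2}$ and $\Sigma_2^{1/2}\Sigma_1\Sigma_2^{1/2}$ share their nonzero spectrum; and the $\epsilon$-regularization is legitimate since adding $\epsilon I_{2d}$ preserves positive semidefiniteness of the joint covariance while the final bound is continuous in $\epsilon$. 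One cosmetic remark: the closing observation about equality being attained by the Gaussian OT coupling is a coupling of $N_X$ and $N_Y$, not of $X$ and $Y$, so it explains why the lower bound equals $W_2(N_X,N_Y)$ rather than asserting the supremum over $\ADM(\law(X),\law(Y))$ is attained---but nothing in the proof relies on that, since you substitute the closed form \eqref{eq:gaus_was_d} directly.
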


\textbf{Sim-to-Real Transfer.} 
Assume a source domain with a dynamics model $g_s:\States \times \Actions \to \States$, mapping a state-action pair to the next state, and a corresponding target domain with a dynamics model $g_t$. In a typical setting, the source domain would be given by simulation, and the target domain by real world data. Then, assume that there exists an underlying \emph{transfer map} $T^*$, so that $T^*(s, a, g_s(s,a)) = g_t(s,a)$. Now, given state and action pairs $(s_i,a_i)$, $i=1,...,N$, data is collected from both domains as collections of triplets $X_s = \{(s_i,a_i,g_s(s_i,a_i)\}_{i=1}^N$, and similarly for $X_t$. The goal is then to infer $T^*$ using these two datasets.

\textbf{Limitations of Optimal Transport.} As we apply OT to infer the map $T$ (in practice between the empirical measures defined by $X_t$ and $X_s$), some restrictions arise due to the nature of OT, formalized by the Brenier factorization theorem:

\begin{theorem}[Brenier Factorization~\cite{brenier1991polar}]\label{thm:brenier_fac}
Let $\Omega\subset \R^n$ be bounded smooth domain, $T:\Omega \to \R^n$ a Borel map which does not map positive volume into zero volume. Then, $T$ uniquely decomposes as
\begin{equation}
    T=t \circ u,
\end{equation}
where $u:\Omega \to \Omega$ is volume preserving and $t=\nabla \psi$ is the gradient of a convex function $\psi:\R^n \to \R$.
\end{theorem}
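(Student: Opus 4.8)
The plan is to obtain the factorization as a by-product of a quadratic-cost optimal transport problem. Normalize the volume of $\Omega$ to a probability measure $\mu \defeq \mathcal{L}^n|_\Omega/|\Omega|$ and let $\nu \defeq T_\#\mu$. The first --- and, I expect, least routine --- step is to observe that the non-degeneracy hypothesis forces $\nu$ to be absolutely continuous w.r.t.\ Lebesgue measure: if $\mathcal{L}^n(C)=0$ and $A\defeq T^{-1}(C)$, then $T(A)\subseteq C$, so $\mathcal{L}^n(A)>0$ would say that $T$ maps a positive-volume set into a null set, a contradiction; hence $\nu(C)=\mu(A)=0$. (The only technical wrinkle is measurability of the image $T(A)$, dealt with by a measurable hull.)

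Next I would invoke the existence of Brenier maps --- the existence companion to the uniqueness in Theorem~\ref{thm:OT_maps} --- which applies since $\mu$ and $\nu$ are both absolutely continuous (and have finite second moments, e.g.\ because $\Omega$ is bounded and $T\in L^2(\mu)$). This produces a convex $\psi$ whose gradient $t\defeq\nabla\psi$ is the unique gradient-of-convex map with $t_\#\mu=\nu$, and, applying the theorem to $\nu$, a conjugate potential $\psi^*$ with $t^*\defeq\nabla\psi^*$ satisfying $t^*_\#\nu=\mu$; these are mutually inverse, $t^*\circ t=\mathrm{id}$ $\mu$-a.e.\ and $t\circ t^*=\mathrm{id}$ $\nu$-a.e. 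Now set $u\defeq t^*\circ T$. Then $u_\#\mu=t^*_\#(T_\#\mu)=t^*_\#\nu=\mu$, so $u$ is volume preserving, and it takes values in $\Omega$ up to a $\mu$-null set because $\Supp\mu=\overline{\Omega}$. Moreover $t\circ u=t\circ t^*\circ T$, and since $T_\#\mu=\nu$ the $\nu$-null set where $t\circ t^*\neq\mathrm{id}$ pulls back through $T$ to a $\mu$-null set, whence $t\circ u=T$ $\mu$-a.e. This delivers the decomposition.

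For uniqueness, let $T=t_1\circ u_1=t_2\circ u_2$ with each $t_i$ a gradient of a convex function and each $u_i$ volume preserving. Pushing $\mu$ forward, $t_{i\,\#}\mu=(t_i\circ u_i)_\#\mu=T_\#\mu=\nu$, so $t_1$ and $t_2$ are both gradients of convex functions transporting $\mu$ onto $\nu$; by the uniqueness in Theorem~\ref{thm:OT_maps}, $t_1=t_2=:t$ $\mu$-a.e. Composing the identity $t_1\circ u_1=t_2\circ u_2$ on the left with $t^*$ and using $t^*\circ t=\mathrm{id}$ $\mu$-a.e.\ (and $u_{1\,\#}\mu=u_{2\,\#}\mu=\mu$ to stay off the null set) gives $u_1=u_2$ $\mu$-a.e. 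The main obstacle is not a single hard estimate but the careful bookkeeping of these $\mu$-null exceptional sets --- which is precisely why the absolute-continuity step of the first paragraph is indispensable --- while the existence and uniqueness of Brenier maps are invoked as a black box.
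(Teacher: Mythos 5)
The paper does not prove this statement---it is quoted verbatim (as background) from Brenier's 1991 paper on polar factorization, so there is no in-paper proof to compare against. Your argument is the standard one (it is essentially how Villani derives polar factorization from the existence and uniqueness of optimal maps for the quadratic cost), and it is correct: the equivalence of the non-degeneracy hypothesis with absolute continuity of $\nu = T_\#\mu$ is exactly right (and, as you suspect, you do not even need measurability of $T(A)$---the set $A = T^{-1}(C)$ itself witnesses the violation, since $T(A)\subseteq C$ and $C$ is already null); the construction $u = \nabla\psi^*\circ T$ and the bookkeeping of $\mu$- and $\nu$-null sets are handled properly; and the uniqueness argument via $t_{i\,\#}\mu=\nu$ is the standard one. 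Two caveats are worth flagging. First, the load-bearing input is the \emph{existence} of a gradient-of-convex map pushing $\mu$ to $\nu$ (and $\nu$ to $\mu$), which you invoke as a black box; the paper's Theorem~\ref{thm:OT_maps} (Knott--Smith) supplies only sufficiency and uniqueness, not existence, and in Brenier's original work existence of optimal maps and polar factorization are established together---so your proof is a correct \emph{reduction} of the factorization to Brenier's existence theorem rather than a from-scratch proof, which is a reasonable thing to do for a cited classical result but should be stated as such. Second, you quietly assume $T\in L^2(\mu)$ to get $\nu\in\Probs_2(\R^n)$; this hypothesis is present in Brenier's original statement but has been dropped from the paper's restatement, and without it the quadratic-cost machinery does not apply, so your parenthetical is actually repairing an omission in the statement rather than a redundancy.
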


In the linear world, this translates into the polar decomposition of a matrix: given an invertible matrix $A$, we can write it uniquely as $A=PU$, where $P$ is symmetric positive definite and $U$ orthogonal.

According to Theorem~\ref{thm:OT_maps}, for the $2$-Wasserstein distance, OT maps are precisely the gradient of convex maps. Therefore, we can only learn $f$ modulo volume preserving functions with OT.

\section{Affine Transport}
In the following, we define the \emph{affine transport} (AT) framework, a simplification of OT, which we will apply to the Sim-to-Real problem. It is motivated by cheap, closed-form computations that are simple to implement, and allows for theoretical guarantees.

\textbf{Affine Transport.} Denote by $N_X$ the normal approximation of a random variable $X$. Then, the \emph{affine transport map} (AT map) $T_\aff$ between a \emph{source} $X$ and a \emph{target} $Y$ is given by the OT map between $N_X$ and $N_Y$, that is, the affine transformation
\begin{equation}
\begin{aligned}
     T_\aff(x) &= Ax + b\\
     A &= \Sigma(Y)^\frac{1}{2}\left(\Sigma(Y)^\frac{1}{2} \Sigma(X) \Sigma(Y)^\frac{1}{2}\right)^{-\frac{1}{2}}\Sigma(Y)^\frac{1}{2}\\
     b &= \mu(Y)-A\mu(X), 
\end{aligned}
\end{equation}
where $\Sigma(X)$ is the covariance matrix and $\mu(X)$ the mean of $X$. If needed, we will denote the AT map by $T_\aff[X,Y]$ to emphasize the source and target.

\textbf{Connections with Optimal Transport.}  Naturally if $X$ and $Y$ are already Gaussian, the AT and OT maps are exactly the same. But are there other cases where AT and OT coincide? Yes, whenever $Y$ is a positive semi-definite affine transform of $X$. In the following, we consider centered distributions and linear maps, but due to \eqref{eq:wasserstein_split} and \eqref{eq:linear_to_affine}, the results can be extended to the non-centered case by replacing linear maps with affine maps. We start with a corollary of the Knott-Smith optimality criterion.

\begin{corollary}\label{cor:affine_ot_map}
Let $X,Y\in \Probs_2(\R^d)$ be centered, and assume that $Y=TX$, where $T$ is a positive semi-definite matrix. Then, $T$ is the optimal transport map from $X$ to $Y$.
\end{corollary}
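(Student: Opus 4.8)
The plan is to invoke the Knott--Smith optimality criterion (Theorem~\ref{thm:OT_maps}), which says that a map $T$ is the OT map between $\mu$ and $T_\#\mu$ whenever $T = \nabla\phi$ for some convex $\phi$ (and $T(X)$ has finite second moment). So the entire argument reduces to exhibiting $T$, the given positive semi-definite matrix, as the gradient of a convex function, and checking the integrability side condition.

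First I would write down the candidate potential explicitly: set $\phi(x) = \tfrac{1}{2}\langle x, Tx\rangle = \tfrac12 x^\top T x$. Since $T$ is symmetric, $\nabla\phi(x) = Tx$, so $T$ is indeed a gradient map. Second, I would verify convexity of $\phi$: its Hessian is $T$, which is positive semi-definite by hypothesis, so $\phi$ is convex. Third, I would check the hypothesis $T(X)=Y\in\Probs_2(\R^d)$: this is immediate since $Y = TX$ and $X\in\Probs_2(\R^d)$ imply $\E[\|TX\|^2] \le \|T\|_{\mathrm{op}}^2\,\E[\|X\|^2] < \infty$, where $\|T\|_{\mathrm{op}}$ is the operator norm. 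Fourth, since $Y = TX = T_\#X$ by definition of push-forward, Theorem~\ref{thm:OT_maps} applies verbatim with $\mu = \law(X)$, giving that $T$ is the unique OT map from $X$ to $Y$.

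There is essentially no serious obstacle here; the only subtlety worth a sentence is that symmetry of $T$ is what makes $\nabla\phi(x) = Tx$ (without it one only gets $\tfrac12(T+T^\top)x$), so the positive semi-definiteness assumption should be read as including symmetry, which is the standard convention and is exactly the structure produced by the Brenier/polar decomposition discussion preceding the corollary. One could optionally remark that this is the linear-world shadow of Theorem~\ref{thm:OT_maps}: convex potentials become positive semi-definite quadratic forms, and their gradients become positive semi-definite matrices. I would also note, for use later in the paper, that combining this with \eqref{eq:linear_to_affine} extends the statement to the affine (non-centered) case: if $Y - \E[Y] = T(X - \E[X])$ with $T\succeq 0$, then $x\mapsto T(x-\E[X]) + \E[Y]$ is the OT map from $X$ to $Y$.
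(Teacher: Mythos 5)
Your proposal is correct and follows exactly the same route as the paper: both apply the Knott--Smith criterion (Theorem~\ref{thm:OT_maps}) to a quadratic convex potential whose gradient is $T$. Your version is in fact slightly more careful, since you use $\phi(x)=\tfrac12 x^\top T x$ (whose gradient is $Tx$) and check the second-moment condition, whereas the paper writes $\phi(x)=x^\top T x$, which would give gradient $2Tx$.
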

\begin{proof}
Apply Theorem~\ref{thm:OT_maps} to the convex $\phi(x)= x^T T x$, where $T$ is positive semi-definite.
\end{proof}

The next result then states when AT and OT coincide. Furthermore, it allows computing the $2$-Wasserstein distance between arbitrary $X$ and $Y$ in closed-form using their normal approximations, given that $X$ and $Y$ differ by an affine transformation.
\begin{theorem}\label{thm:affine_OT}
Let $X,Y\in \Probs_2(\R^d)$ be centered and $Y = TX$ for a positive definite matrix $T$. Then $T$ is also the OT map between $N_X$ and $N_Y$, and $W_2(N_X,N_Y) = W_2(X,Y)$. That is, $T = T_\aff[X,Y]$.
\end{theorem}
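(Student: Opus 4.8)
The plan is to prove the theorem in two parts: first that $T$ itself is the OT map between the normal approximations $N_X$ and $N_Y$, and then that the two $2$-Wasserstein distances coincide; the second part comes almost for free once one notices that both distances are the same quadratic functional of $\Sigma(X)$.

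First I would record what the hypothesis already buys. Since $T$ is positive definite (in particular positive semi-definite) and $Y=TX$, Corollary~\ref{cor:affine_ot_map} says $T$ is the OT map from $X$ to $Y$, so $W_2^2(X,Y)=\E\big[\|X-TX\|^2\big]=\E\big[\|(I-T)X\|^2\big]$. Because $X$ is centered, this equals $\Tr\big((I-T)^2\Sigma(X)\big)$; in particular it depends on $X$ only through its covariance.

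Next I would pass to the Gaussians. Using that $T$ is symmetric, $\Sigma(Y)=\Sigma(TX)=T\Sigma(X)T$, so if $Z\sim N_X=\N(0,\Sigma(X))$ then $TZ\sim\N(0,T\Sigma(X)T)=\N(0,\Sigma(Y))=N_Y$; that is, $T_\#N_X=N_Y$. Since $T$ is positive definite, $T=\nabla\phi$ for the convex function $\phi(x)=\frac12 x^T T x$, and $T_\#N_X=N_Y\in\Probs_2(\R^d)$, so Theorem~\ref{thm:OT_maps} applies and shows that $T$ is the \emph{unique} OT map between $N_X$ and $N_Y$. By the definition of the affine transport map this is exactly the claim $T=T_\aff[X,Y]$, and by uniqueness $T$ must coincide with the closed-form Gaussian OT map $\Sigma(Y)^{1/2}\big(\Sigma(Y)^{1/2}\Sigma(X)\Sigma(Y)^{1/2}\big)^{-1/2}\Sigma(Y)^{1/2}$.

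Finally, for the distance equality: since $T$ realizes the Monge problem between $N_X$ and $N_Y$, we get $W_2^2(N_X,N_Y)=\E_{Z\sim N_X}\big[\|Z-TZ\|^2\big]=\Tr\big((I-T)^2\Sigma(X)\big)$, the identical expression found above for $W_2^2(X,Y)$; hence $W_2(N_X,N_Y)=W_2(X,Y)$, and as a sanity check the Gelbrich bound of Theorem~\ref{thm:gaussians_lower_bound_2was} is saturated. I do not expect a genuine obstacle here: the only point needing care is the invocation of Theorem~\ref{thm:OT_maps} when $\Sigma(X)$ is singular, where one should use the ``$T=\nabla\phi$ convex with $T(X)\in\Probs_2$'' form of the criterion rather than appealing to absolute continuity of $N_X$; the remaining steps are routine second-moment bookkeeping.
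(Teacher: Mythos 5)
Your proposal is correct and follows essentially the same route as the paper: invoke Corollary~\ref{cor:affine_ot_map}, check $\Sigma(TN_X)=T\Sigma(X)T=\Sigma(Y)$ so that $T_\#N_X=N_Y$, and apply Theorem~\ref{thm:OT_maps} to $\phi(x)=\tfrac12 x^TTx$. The only (minor, and arguably cleaner) difference is the last step: where the paper equates the Gaussian closed-form expression \eqref{eq:gaus_was_d} with the Monge cost, you observe that both Monge costs equal the same second-moment functional $\Tr\bigl((I-T)^2\Sigma(X)\bigr)$, which avoids the trace manipulation entirely.
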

\begin{proof}
Corollary~\ref{cor:affine_ot_map} states that $T$ is an OT map, and 
\begin{equation}
    \Sigma(T N_X) = T\Sigma(X)T = \Sigma(Y).
\end{equation}
Therefore, $T N_X = N_Y$, and by Theorem~\ref{thm:OT_maps}, $T$ is the OT map between $N_X$ and $N_Y$. Finally, we compute
\begin{equation}
    \begin{aligned}
    W^2_2(N_X,N_Y) =& \Tr[\Sigma(X)] + \Tr[T\Sigma(X)T]\\
    &- 2\Tr[T^\frac{1}{2}\Sigma(X) T^\frac{1}{2}]\\
    =& \argmin\limits_{T:T(X) = Y}\E_X[\|X-T(X)\|^2]\\
    =& W_2^2(X,Y).
    \end{aligned}
\end{equation}
\end{proof}

\textbf{Error Bounds for Affine Transport.} If $X$ and $Y$ differ by a more complicated transformation than an affine one, how well does $T_\aff X$ match $Y$? We answer this below, by first bounding the difference between the $2$-Wasserstein distance of $X,Y$ and the $2$-Wasserstein distance between their normal approximations.

\begin{proposition}\label{prop:affine_vs_standard_OT}
Let $X,Y\in \Probs_2(\R^d)$ and $N_X, N_Y$ be their normal approximations. Then, we have the bound
\begin{equation}
    \left| W_2(N_X, N_Y) - W_2(X,Y)\right| \leq \frac{2\Tr\left[\left(\Sigma(X)\Sigma(Y)\right)^\frac{1}{2}\right]}{\sqrt{\Tr[\Sigma(X)] + \Tr[\Sigma(Y)]}}.
\end{equation}
\end{proposition}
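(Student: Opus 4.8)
The plan is to reduce to centered distributions, rewrite the right-hand side purely in terms of $W_2(N_X,N_Y)$, and then close the estimate with two cheap ingredients: the Gelbrich bound (Theorem~\ref{thm:gaussians_lower_bound_2was}) and the trivial upper bound $W_2(X,Y)\le\sqrt{\Tr[\Sigma(X)]+\Tr[\Sigma(Y)]}$ that holds for centered variables.

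First I would reduce to centered $X,Y$. Set $m=\|\E[X]-\E[Y]\|$, $a=W_2(X-\E[X],Y-\E[Y])$ and $b=W_2(N_{X-\E[X]},N_{Y-\E[Y]})$. Applying \eqref{eq:wasserstein_split} both to $(X,Y)$ and to $(N_X,N_Y)$ (whose covariances equal those of $X,Y$) gives $W_2(X,Y)=\sqrt{a^2+m^2}$ and $W_2(N_X,N_Y)=\sqrt{b^2+m^2}$. By Gelbrich $b\le a$; if $b=a$ the left-hand side vanishes and nothing is to prove, so assume $a>b\ge0$. Writing $\sqrt{a^2+m^2}-\sqrt{b^2+m^2}=\frac{a^2-b^2}{\sqrt{a^2+m^2}+\sqrt{b^2+m^2}}$ and bounding the denominator below by $a+b$ gives $\sqrt{a^2+m^2}-\sqrt{b^2+m^2}\le a-b$. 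Since $\Sigma(X-\E[X])=\Sigma(X)$ and $\Sigma(Y-\E[Y])=\Sigma(Y)$, the right-hand side of the proposition is unchanged by centering, so it suffices to prove the bound for centered $X,Y$.

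For centered $X,Y$, write $p=\Tr[\Sigma(X)]$, $q=\Tr[\Sigma(Y)]$, $s=\sqrt{p+q}$ and $w=W_2(N_X,N_Y)$, and record two facts. (i) Testing the Kantorovich problem with the product coupling $\law(X)\otimes\law(Y)$ and using $\E[X]=\E[Y]=0$ gives $W_2^2(X,Y)\le\E\|X\|^2+\E\|Y\|^2=p+q$, i.e.\ $W_2(X,Y)\le s$. (ii) By the Gaussian formula \eqref{eq:gaus_was_d} together with the standard identity $\Tr[(\Sigma(Y)^{1/2}\Sigma(X)\Sigma(Y)^{1/2})^{1/2}]=\Tr[(\Sigma(X)\Sigma(Y))^{1/2}]$ (both equal $\sum_i\sqrt{\lambda_i}$ over the nonnegative eigenvalues $\lambda_i$ of $\Sigma(X)\Sigma(Y)$), we get $w^2=p+q-2\Tr[(\Sigma(X)\Sigma(Y))^{1/2}]$, hence $2\Tr[(\Sigma(X)\Sigma(Y))^{1/2}]=s^2-w^2$ and $0\le w\le s$.

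To finish: by Gelbrich $w\le W_2(X,Y)$, so the left-hand side equals $W_2(X,Y)-w$, which is at most $s-w$ by (i); and by (ii) the right-hand side equals $\frac{s^2-w^2}{s}=(s-w)\,\frac{s+w}{s}\ge s-w$ since $0\le w\le s$. Chaining the two estimates proves the proposition, the trivial case $p=q=0$ (both variables Dirac, right-hand side $0/0$) being vacuous. I expect the only non-mechanical step to be spotting that the numerator $2\Tr[(\Sigma(X)\Sigma(Y))^{1/2}]$ is exactly $\Tr[\Sigma(X)]+\Tr[\Sigma(Y)]-W_2^2(N_X,N_Y)$; once the right-hand side is read this way the inequality collapses to the elementary fact $\frac{s^2-w^2}{s}\ge s-w$ plus Gelbrich and $W_2(X,Y)\le\sqrt{\Tr[\Sigma(X)]+\Tr[\Sigma(Y)]}$, and the only spot requiring genuine care is the centering reduction above.
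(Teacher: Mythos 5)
Your proof is correct and follows essentially the same route as the paper's: Gelbrich's bound, the second-moment upper bound $W_2^2(X,Y)\le\Tr[\Sigma(X)]+\Tr[\Sigma(Y)]$ obtained from the product coupling, and the Gaussian closed form to identify the right-hand side as $(s^2-w^2)/s$, finishing with the elementary inequality $s-w\le(s^2-w^2)/s$, which is exactly the paper's step $\sqrt{a}-\sqrt{a-b}\le b/\sqrt{a}$. Your explicit reduction to centered variables is a worthwhile refinement, since the paper's identity $\int(\|x\|^2+\|y\|^2)\,d\gamma=\Tr[\Sigma(X)]+\Tr[\Sigma(Y)]$ is only valid in the centered case and the paper passes over this silently.
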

\begin{proof}
By Theorem~\ref{thm:gaussians_lower_bound_2was}, we have $W_2(N_X,N_Y) \leq W_2(X,Y)$. On the other hand,
\begin{equation}
\begin{aligned}
    W^2_2(X,Y) &= \min\limits_{\gamma \in \ADM(X,Y)} \int_{\R^d \times \R^d} \|x-y\|^2 d\gamma(x,y)\\
    &\leq \int_{\R^d \times \R^d}\left(\|x\|^2 + \|y\|^2\right) d\gamma(x,y)\\
    &= \Tr[\Sigma(X)] + \Tr[\Sigma(Y)].
\end{aligned}
\end{equation}
Combining the above inequalities, we get
\begin{equation}\label{eq:affine_vs_standard_OT_eq1}
\begin{aligned}
    &\left|W_2(N_X,N_Y) - W_2(X,Y)\right|\\
    \leq& \left| \sqrt{\Tr[\Sigma(X)] + \Tr[\Sigma(Y)]} - W_2(N_X,N_Y)\right|.
\end{aligned}
\end{equation}
Let $a = \Tr[\Sigma(X)] + \Tr[\Sigma(Y)]$, and so $W^2_2(N_X,N_Y) = a - b$, where $b = 2\Tr\left[\left(\Sigma(X)\Sigma(Y)\right)^\frac{1}{2}\right]$. Then the RHS of \eqref{eq:affine_vs_standard_OT_eq1} can be written as
\begin{equation}
    \left|\sqrt{a} - \sqrt{a-b}\right| = \frac{|a - (a-b)|}{\sqrt{a} + \sqrt{a-b}} \leq \frac{b}{\sqrt{a}},
\end{equation}
where the inequality follows from positivity of $W_2(N_X,N_Y)=\sqrt{a-b}$. This implies the claim.
\end{proof}

Proposition~\ref{prop:affine_vs_standard_OT} implies the following $2$-Wasserstein bound for the normal approximation of a random variable.
\begin{corollary}\label{cor:normal_approximation_error}
Let $X\in \Probs_2(\R^d)$ and $N_X$ be its normal approximation. Then
\begin{equation}
    W_2(N_X,X) \leq \sqrt{2}\Tr[\Sigma(X)]^\frac{1}{2}.
\end{equation}
\end{corollary}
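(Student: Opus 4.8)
The plan is to apply Proposition~\ref{prop:affine_vs_standard_OT} with the special choice $Y := N_X$. First I would check this substitution is legitimate: $N_X$ is a Gaussian sharing the (finite) mean and covariance of $X$, hence $N_X \in \Probs_2(\R^d)$, so the proposition applies to the pair $(X, N_X)$. The crucial observation is that the normal approximation of a Gaussian is that same Gaussian, so the normal approximation of $Y = N_X$ is again $N_X$. Consequently the left-hand side of the proposition's estimate collapses to $\left|W_2(N_X, N_X) - W_2(X, N_X)\right| = W_2(N_X, X)$.

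Next I would simplify the right-hand side. We have $\Sigma(Y) = \Sigma(N_X) = \Sigma(X)$, and since $\Sigma(X)$ is positive semi-definite, $\left(\Sigma(X)\Sigma(Y)\right)^{1/2} = \left(\Sigma(X)^2\right)^{1/2} = \Sigma(X)$. Hence the numerator equals $2\Tr[\Sigma(X)]$ and the denominator equals $\sqrt{2\Tr[\Sigma(X)]}$, so the quotient is exactly $\sqrt{2}\,\Tr[\Sigma(X)]^{1/2}$, which is the claimed bound.

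I do not anticipate a genuine obstacle here; the only points needing care are (i) confirming $N_X \in \Probs_2(\R^d)$ so that Proposition~\ref{prop:affine_vs_standard_OT} may be invoked with $Y = N_X$, and (ii) using the idempotence of normal approximation ($N_{N_X} = N_X$) together with positive semi-definiteness of $\Sigma(X)$ to reduce the bound to closed form. As a sanity check, a self-contained alternative bypasses the proposition entirely: since $X$ and $N_X$ have the same mean, \eqref{eq:wasserstein_split} reduces the estimate to the centered case, and the trivial coupling bound $W_2^2 \leq \Tr[\Sigma(X)] + \Tr[\Sigma(N_X)] = 2\Tr[\Sigma(X)]$ (exactly the inequality used inside the proof of Proposition~\ref{prop:affine_vs_standard_OT}) yields the same conclusion; presenting it as a corollary of the proposition is simply the tidier route.
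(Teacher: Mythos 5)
Your proof is correct and is exactly the paper's argument: the paper likewise obtains the corollary by applying Proposition~\ref{prop:affine_vs_standard_OT} with $Y=N_X$, and your write-up just spells out the collapse of the left-hand side via $N_{N_X}=N_X$ and the simplification of the bound. No issues.
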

\begin{proof}
Apply Proposition~\ref{prop:affine_vs_standard_OT} with $X=X$ and $Y=N_X$.
\end{proof}

Finally, we show the following error bound for AT. which only depends on the target.

\begin{proposition}\label{prop:affine_OT_error}
    Let $X,Y\in \Probs_2(\R^d)$ and $T_\aff$ be the AT map from $X$ to $Y$. Then, 
    \begin{equation}
        W_2(T_\aff X,Y) \leq \sqrt{2}\Tr\left[\Sigma(Y)\right]^\frac{1}{2}.
    \end{equation}
\end{proposition}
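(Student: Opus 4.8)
The plan is to reduce the estimate to a $2$-Wasserstein bound between two \emph{centered} random variables that share the covariance $\Sigma(Y)$, and then reuse the product-coupling estimate already employed in the proof of Proposition~\ref{prop:affine_vs_standard_OT}.

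\textbf{Step 1 (structure of $T_\aff X$).} By definition $T_\aff=T_\aff[X,Y]$ is the OT map between $N_X$ and $N_Y$, hence affine, $T_\aff(x)=Ax+b$, with $(T_\aff)_\# N_X=N_Y$. For an affine map the push-forward transforms the mean and covariance in a way that depends only on $A$, $b$, and the mean and covariance of the input; since $X$ has the same mean and covariance as $N_X$, the variable $T_\aff X$ has mean $\mu(Y)$ and covariance $\Sigma(Y)$, i.e.\ the same first two moments as $Y$. (Equivalently, $A\mu(X)+b=\mu(Y)$ and $A\Sigma(X)A^\top=\Sigma(Y)$, as already noted inside the proof of Theorem~\ref{thm:affine_OT}.)

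\textbf{Step 2 (center).} Apply the splitting identity \eqref{eq:wasserstein_split}. Because $\E[T_\aff X]=\E[Y]=\mu(Y)$, the mean term vanishes and
\begin{equation*}
W_2^2(T_\aff X,Y)=W_2^2\!\left(T_\aff X-\mu(Y),\,Y-\mu(Y)\right),
\end{equation*}
where both arguments on the right are centered with covariance $\Sigma(Y)$. Bounding the optimal coupling between them by the independent coupling $\law(T_\aff X-\mu(Y))\otimes\law(Y-\mu(Y))$, exactly as in Proposition~\ref{prop:affine_vs_standard_OT}, the cross term equals $-2\langle\E[T_\aff X-\mu(Y)],\E[Y-\mu(Y)]\rangle=0$, so
\begin{equation*}
W_2^2\!\left(T_\aff X-\mu(Y),\,Y-\mu(Y)\right)\le\Tr[\Sigma(Y)]+\Tr[\Sigma(Y)]=2\Tr[\Sigma(Y)].
\end{equation*}
Taking square roots yields $W_2(T_\aff X,Y)\le\sqrt2\,\Tr[\Sigma(Y)]^{1/2}$, as claimed.

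I do not anticipate a genuine obstacle; the only point requiring care is the centering in Step~2. One cannot apply the product-coupling bound directly to $T_\aff X$ and $Y$ because $T_\aff X$ need not be centered, and routing instead through the triangle inequality $W_2(T_\aff X,Y)\le W_2(T_\aff X,N_Y)+W_2(N_Y,Y)$ together with Corollary~\ref{cor:normal_approximation_error} (noting $N_{T_\aff X}=N_Y$) would only give the weaker constant $2\sqrt2$. Using \eqref{eq:wasserstein_split} to discard the vanishing mean contribution, together with the fact that the two centered variables share exactly the covariance $\Sigma(Y)$, is precisely what produces the sharp factor $\sqrt2$ and the dependence on the target alone.
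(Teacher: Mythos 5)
Your proposal is correct and follows essentially the same route as the paper: the paper's one-line proof is exactly the product-coupling bound $W_2^2(T_\aff X,Y)\leq \Tr[\Sigma(T_\aff X)]+\Tr[\Sigma(Y)]$ combined with $\Sigma(T_\aff X)=\Sigma(Y)$. Your Step 2 merely makes explicit the centering (via \eqref{eq:wasserstein_split} and the matching means) that the paper leaves implicit, which is a worthwhile clarification but not a different argument.
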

\begin{proof}
By a direct computation
\begin{equation}
\begin{aligned}
    W_2^2(T_\aff X, Y) &\leq \Tr[\Sigma(T_\aff X)] + \Tr[\Sigma(Y)]\\
    &= 2\Tr[\Sigma(Y)].
\end{aligned}
\end{equation}
\end{proof}

The bound given in Proposition~\ref{prop:affine_OT_error} lets us define the following \emph{affinity score}
\begin{equation}
    \rho_\aff(X,Y) = 1 - \frac{W_2(T_\aff X,Y)}{\sqrt{2}\Tr[\Sigma(Y)]^\frac{1}{2}},
\end{equation}
describing how much $Y$ differs from being a positive-definite affine transformation of $X$, as $0\leq \rho_\aff(X,Y) \leq 1$, and $\rho_\aff(X,Y)=1$ when $X$ and $Y$ differ by an affine transformation, and $\rho_\aff(X,Y)=0$ when the OT plan between $T_\aff X$ and $Y$ is the independent distribution.

\begin{figure}
    \centering
    \includegraphics[width=\linewidth]{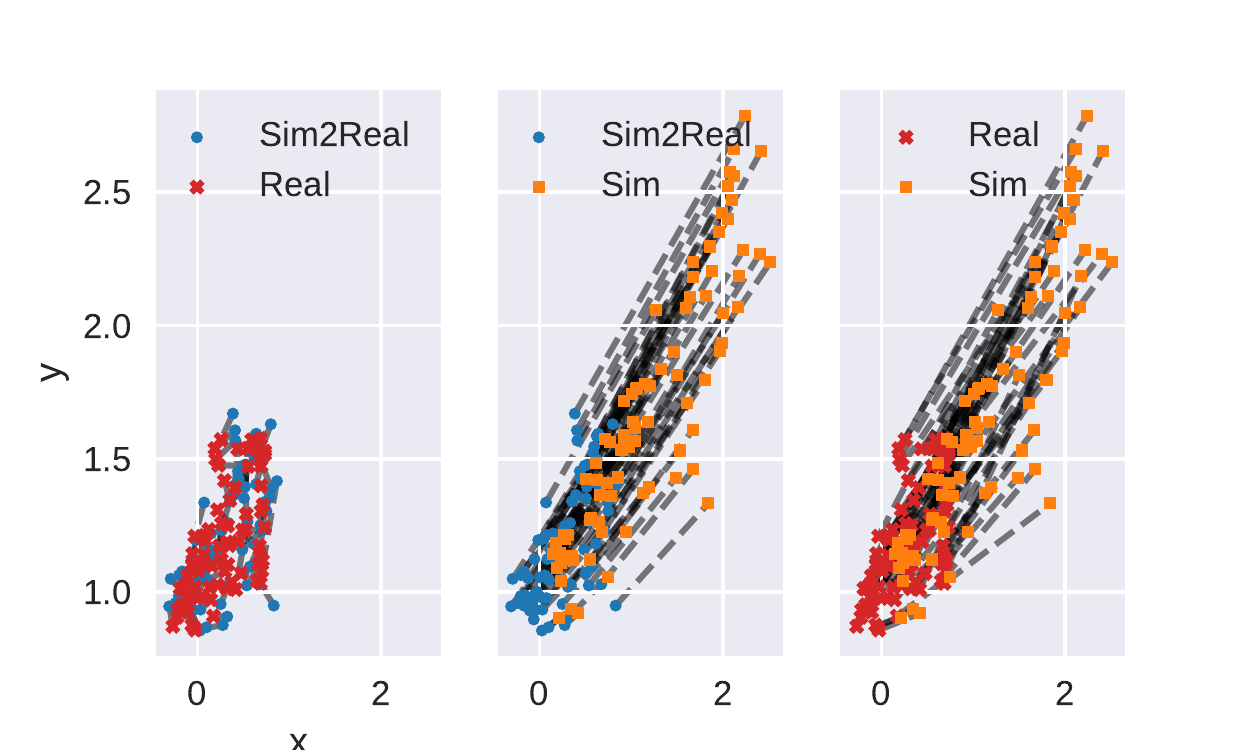}
    \caption{ Sim2Real transfer illustrated for the blue puck simulated with \textit{isotr\_low\_offc} settings.}
    \label{fig:transfer_example}
\end{figure}

\textbf{Affine Transport for Domain Adaptation.} Now, given source and target data $X_s$ and $X_t$, respectively, we can simply apply affine transport and use $T_\aff[X_s, X_t]$ as the transfer map. However, as discussed in Sec.~\ref{sec:background}, this way we can only learn the underlying transfer map up to a volume preserving measure. 

More concretely, if $X_t=T^*(X_s)$, where $T^*(x) = A^*x + b^*$ is affine, we can write the polar decomposition $A^*=PU$. If the orthogonal $U$ is the identity, then AT recovers the underlying transfer map $A^*$. Otherwise, we need a way to take into account the orthogonal part. This motivates us to 'preprocess' the data by applying \emph{Procrustes alignment}~\cite{kendall1989survey}: given matrices $A,B$, find the orthogonal matrix $R$ such that
\begin{equation}
    R = \argmin\limits_{R:R^TR=I} \|RA-B\|.
\end{equation}
In practice $R$ is easy to compute. Let $M=BA^T$ with the singular-value decomposition $M=V_1 D V_2^T$, then $R = V_1V_2^T$. The transfer learning method resulting from applying Procrustes alignment and AT is summarized in Algorithm~\ref{alg:method}. Note that although we do recover a pair of a positive definite and an orthogonal matrix, there are no quarantees that  $(T_\aff, R) = (P,U)$, i.e, we do not necessarily recover the polar decomposition for the underlying transfer map.

\begin{algorithm}\label{alg:method}
\SetKwInOut{Input}{Input}
\SetKwInOut{Output}{Output}
\SetAlgoLined
\Input{Source $X_s$, target $X_t$. Matrices with columns of the form $(s_i, a_i, g(s_i,a_i))$.}
\Output{Transfer map $T$.}
$V_1, D , V_2^T \gets \mathrm{SVD}(X_t,X_s^T)$\;
$R \gets V_1V_2^T$\;
$T' \gets T_\aff[RX_s, X_t]$\;
 $T \gets T' \circ R$\;
 \caption{Sim-to-Learn Transfer with AT}
 \label{alg:method}
\end{algorithm}

\section{Experiments}

\subsection{Gym environments}
In order to assess the method's suitability for domain adaptation, we used the \textit{Hopper} and \textit{HalfCheetah} environment from OpenAI Gym~\cite{brockman16gym}.
In these environments, we randomize the mass and length of each link, randomly invert the direction of each joint, and disable random joints.
The state space in Hopper is 11-dimensional and contains the positions and velocities of each degree of freedom (the rotation angles of each of the 3 joints, the rotation of the whole system, and 2 translations), and the action space is 3-dimensional and consists of torques applied to each joint.
The Ant state space is similar in structure, with 27-dimensional state space and 8-dimensional action space.
Both state spaces ignore the absolute position of the agent in the world (the $x$ position in case of Hopper and $x$ and $y$ in Ant), but include the velocity along these axes.

Domain adaptation is done through applying randomizations to various environment parameters; more specifically, we randomize link masses, joint length (as shown in Figure~\ref{fig:hopz}), and by disabling or inverting the direction of certain joints.
The randomization ranges are presented in Table~\ref{tab:mujoco_rand_ranges}.

The results of this evaluation (using the Hopper and Ant environments are presented in Figures~\ref{fig:hopper_res} and~\ref{fig:ant_res}.
Based on these results, we can observe that using affine transport, despite its simplicity, significantly improves the state prediction performance, with the largest relative reduction apparent for the \textit{inverted} domain.

\begin{figure}
    \centering
    \begin{subfigure}{0.3\linewidth}
    \centering
    \includegraphics[width=\textwidth]{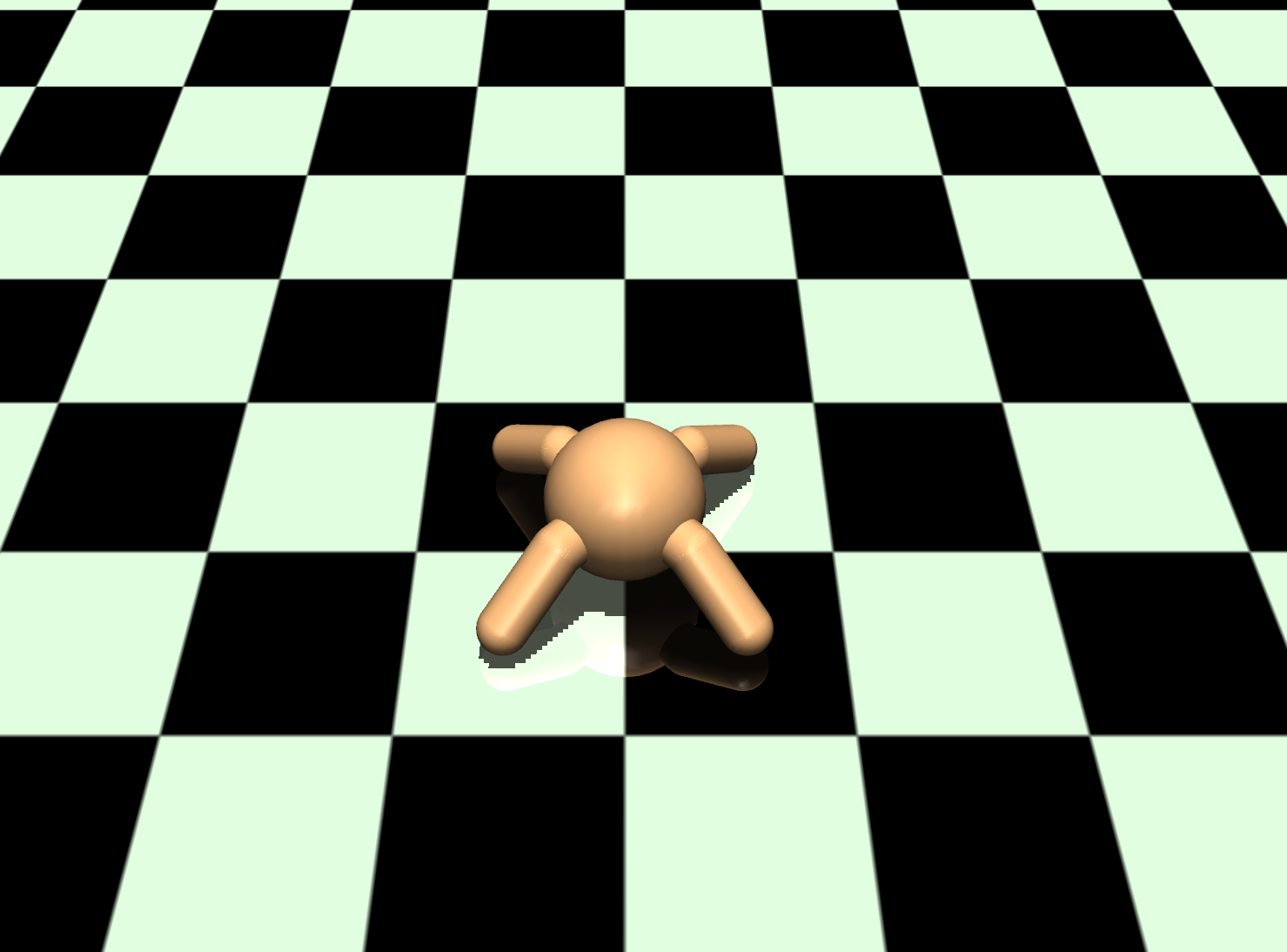}
    \caption{}
    \label{fig:smallant}
    \end{subfigure}
    \begin{subfigure}{0.3\linewidth}
    \centering
    \includegraphics[width=\textwidth]{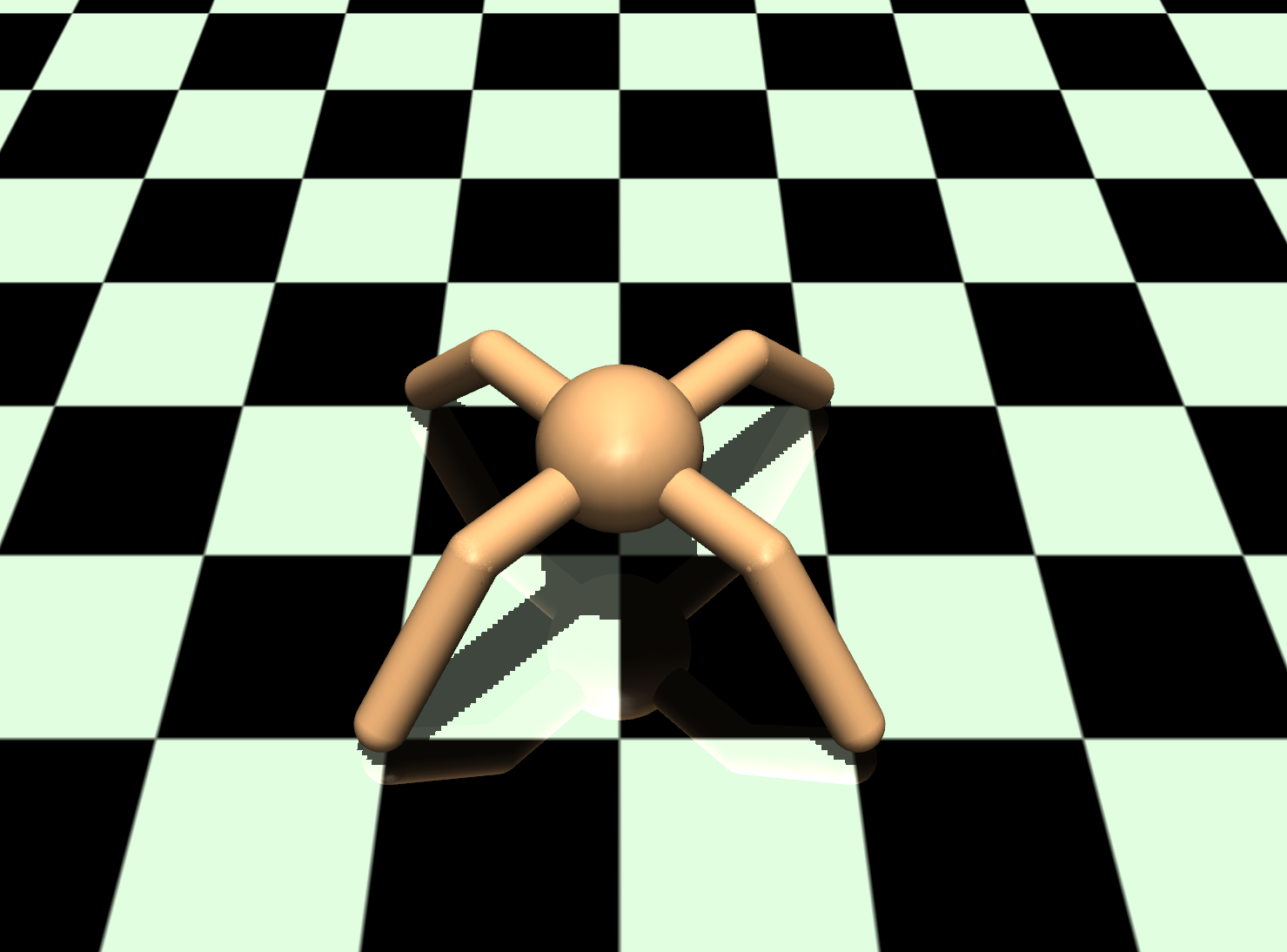}
    \caption{}
    \label{fig:medant}
    \end{subfigure}
    \begin{subfigure}{0.3\linewidth}
    \centering
    \includegraphics[width=\textwidth]{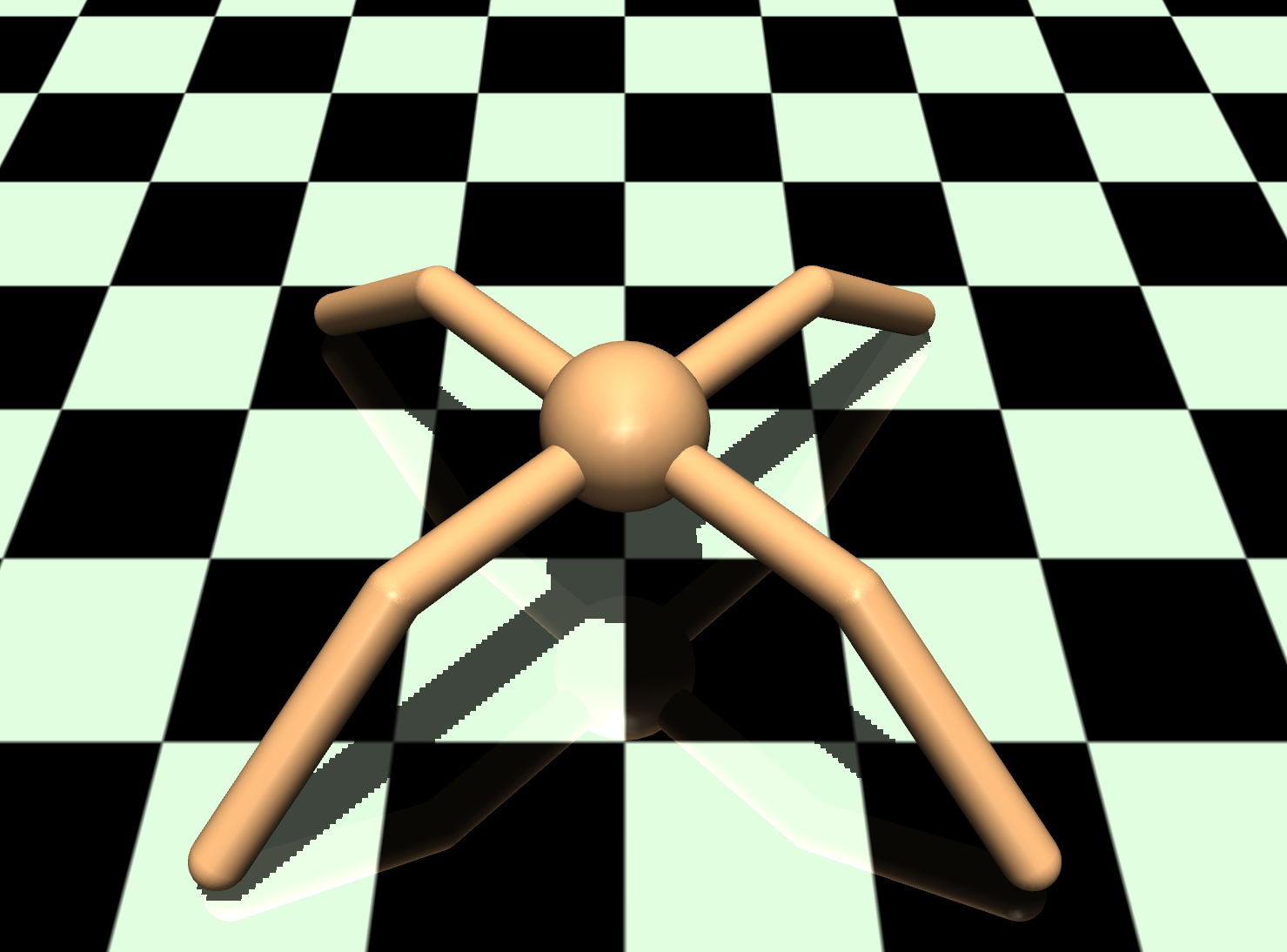}
    \caption{}
    \label{fig:bigant}
    \end{subfigure}
    \begin{subfigure}{0.35\linewidth}
    \centering
    \includegraphics[width=\textwidth]{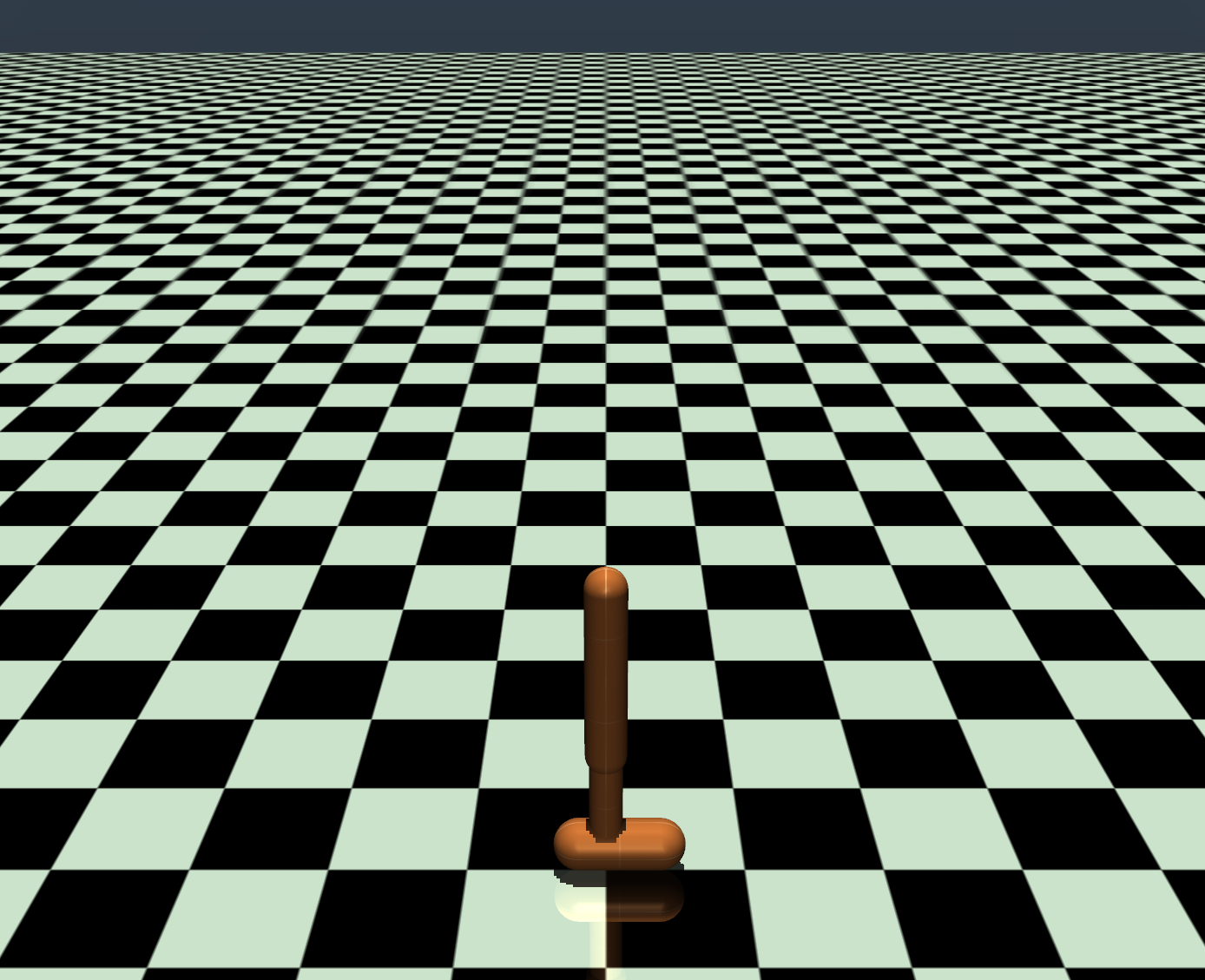}
    \caption{}
    \label{fig:smallhop}
    \end{subfigure}
    \begin{subfigure}{0.35\linewidth}
    \centering
    \includegraphics[width=\textwidth]{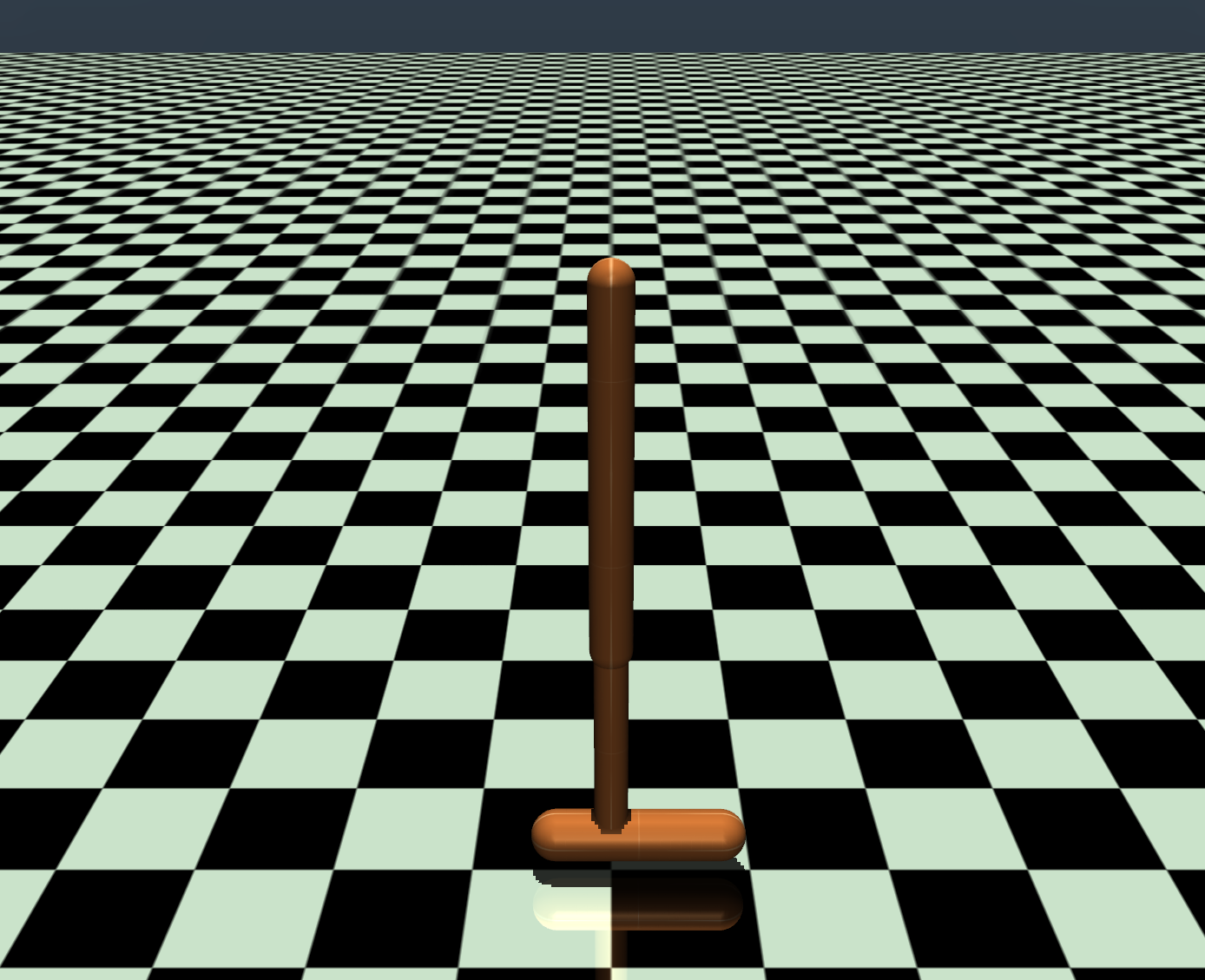}
    \caption{}
    \label{fig:medhop}
    \end{subfigure}
    \begin{subfigure}{0.25\linewidth}
    \centering
    \includegraphics[width=\textwidth]{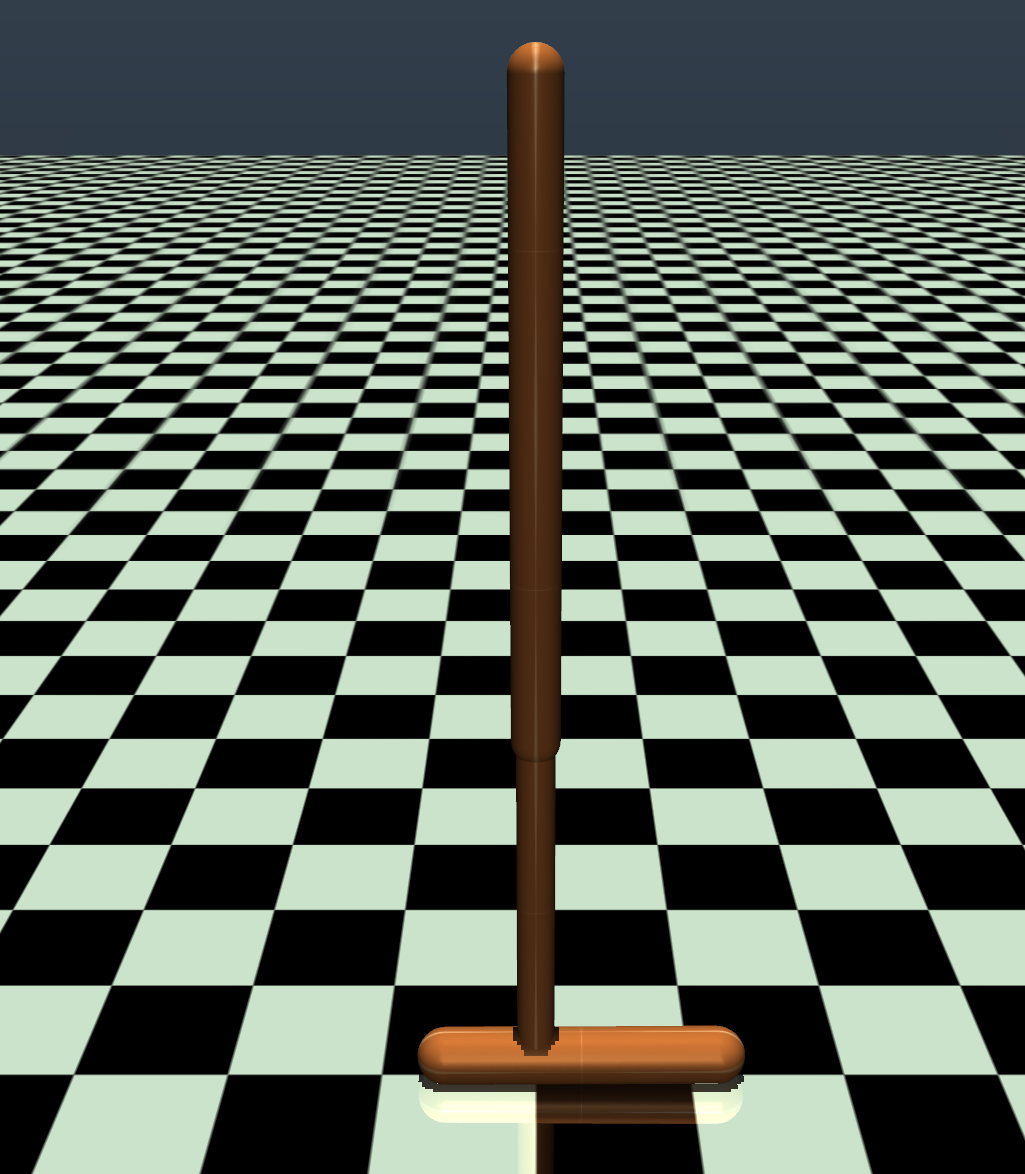}
    \caption{}
    \label{fig:bighop}
    \end{subfigure}
    \caption{The \textit{Ant} (\subref{fig:smallant}, \subref{fig:medant}, \subref{fig:bigant}) and \textit{Hopper} (\subref{fig:smallhop}, \subref{fig:medhop}, \subref{fig:bighop}) environments with different joint lengths used for evaluation}
    \label{fig:hopz}
\end{figure}
\begin{table}[]
    \centering
    \begin{tabular}{c|c|c|c|c}
         Conditions &  Relative size & Relative mass & Inverted joints & Disabled joints \\
         \hline
         Normal & 1 & 1 & None & None \\
         Light & 1 & 0.5 & None & None \\
         Heavy & 1 & 2.0 & None & None \\
         Long & 1.5 & 1 & None & None \\
         Short & 0.5 & 1 & None & None \\
         Some inv. & 1 & 1 & 2, 4, 6, ... & None \\
         Inverted & 1 & 1 & All & None \\
         Some off & 1 & 1 & None & 2, 4, 6, ... \\
    \end{tabular}
    \caption{Randomization ranges for the MuJoCo environments}
    \label{tab:mujoco_rand_ranges}
\end{table}


\begin{figure}
    \centering
    \begin{subfigure}{1.0\linewidth}
        \centering
        \includegraphics[width=.8\linewidth]{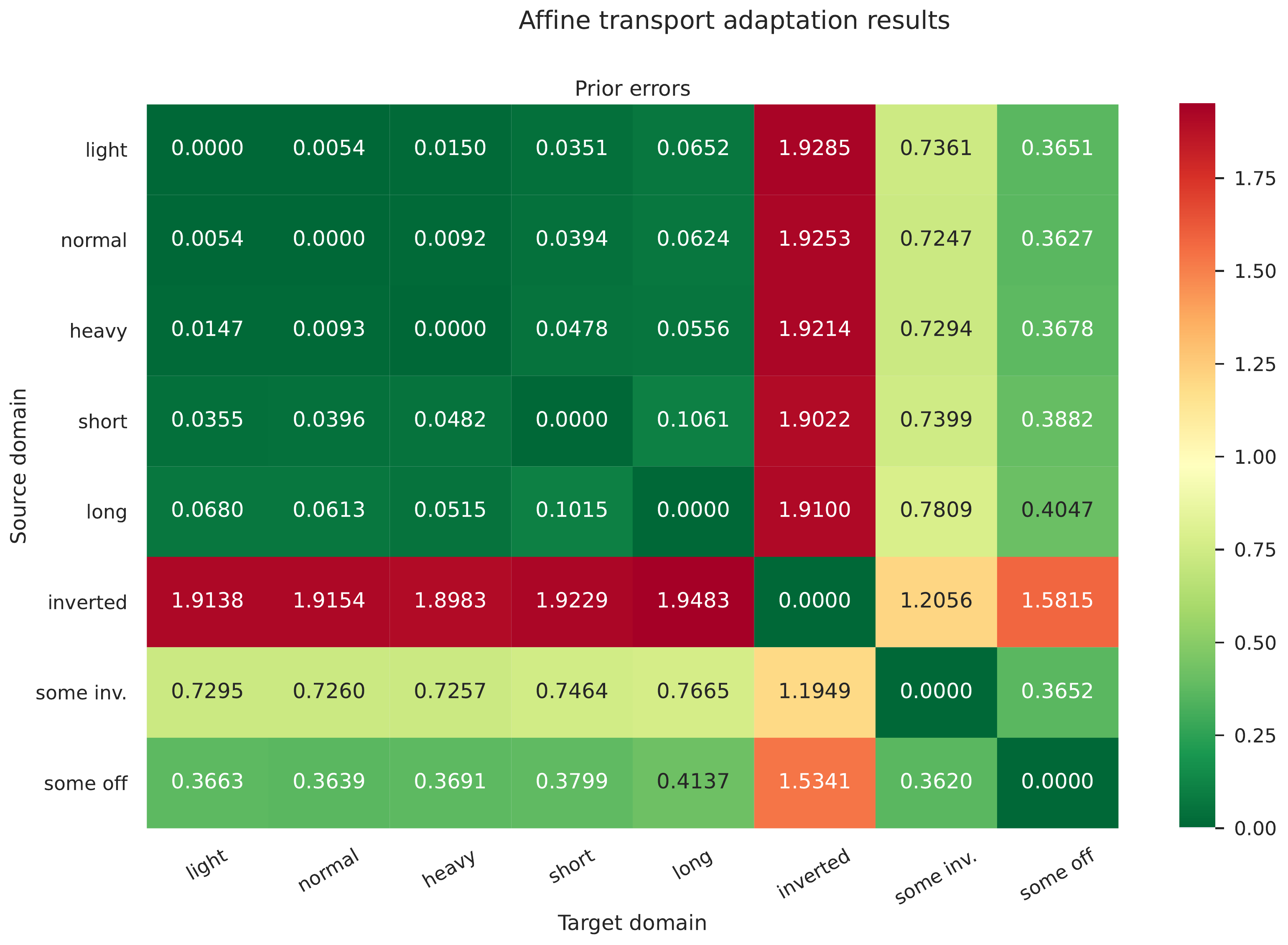}
        \caption{}
        \label{fig:ant_prior}
    \end{subfigure}
    \begin{subfigure}{1.0\linewidth}
        \centering
        \includegraphics[width=.8\linewidth]{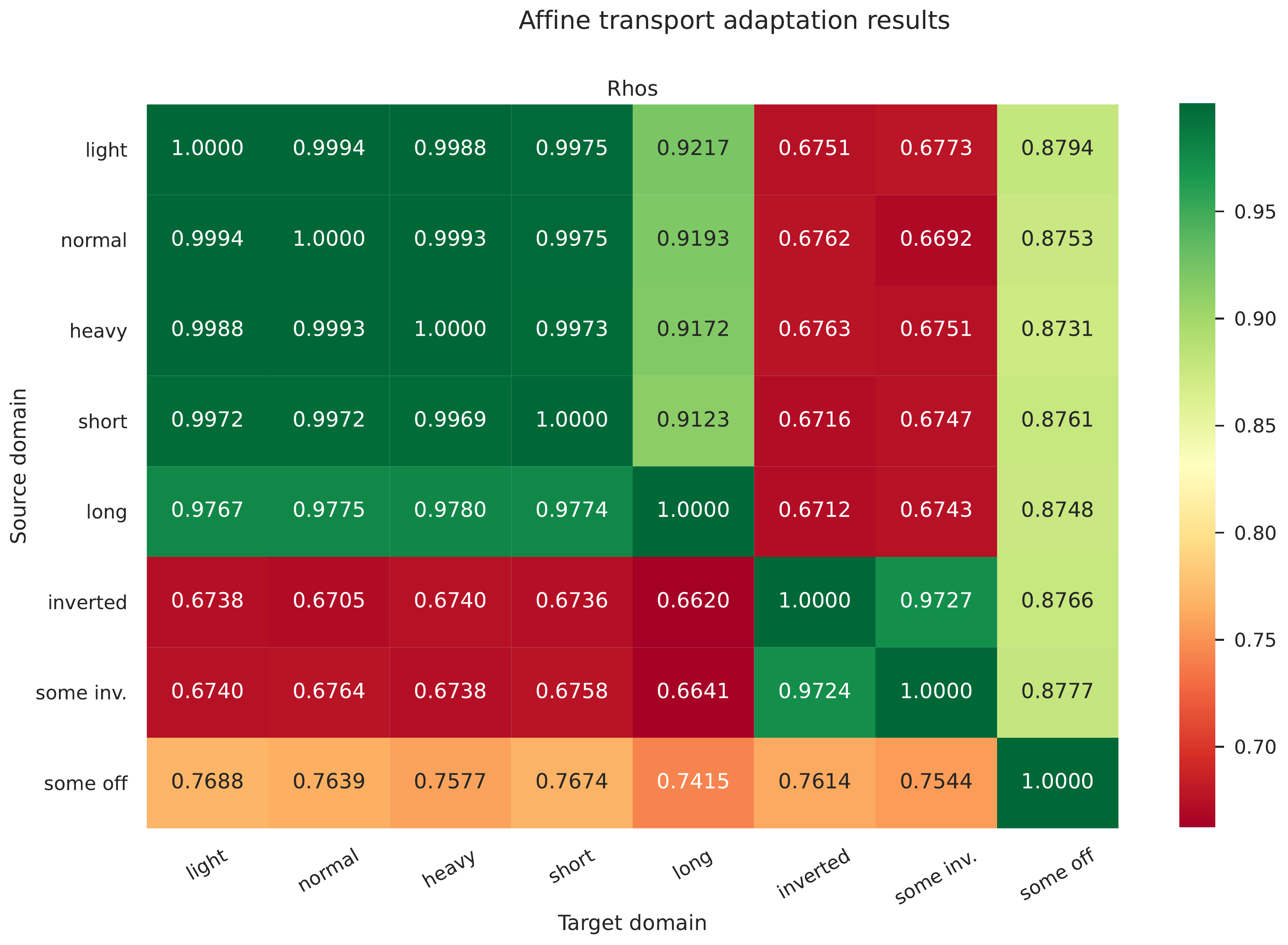}
        \caption{}
        \label{fig:ant_rhos}
    \end{subfigure}
    \begin{subfigure}{1.0\linewidth}
        \centering
        \includegraphics[width=.8\linewidth]{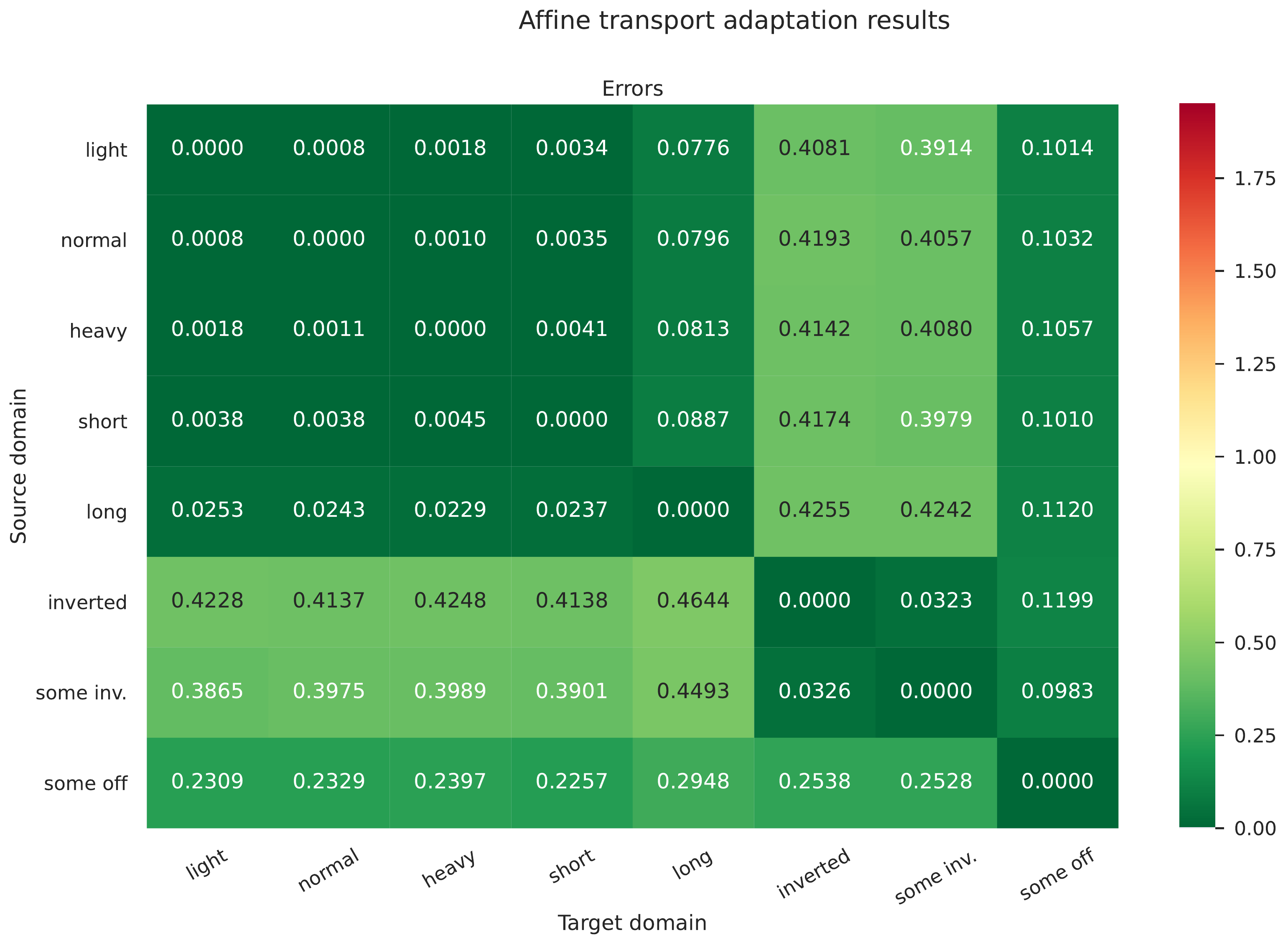}
        \caption{}
        \label{fig:ant_errors}
    \end{subfigure}
    \caption{Results in the Ant environment: before adaptation~(\subref{fig:ant_prior}), the $\rho_{\aff}$ values ~(\subref{fig:ant_rhos}), and the errors after adaptation~(\subref{fig:ant_errors})}
    \label{fig:ant_res}
\end{figure}

\begin{figure}
    \centering
    \begin{subfigure}{1.0\linewidth}
        \centering
        \includegraphics[width=.8\linewidth]{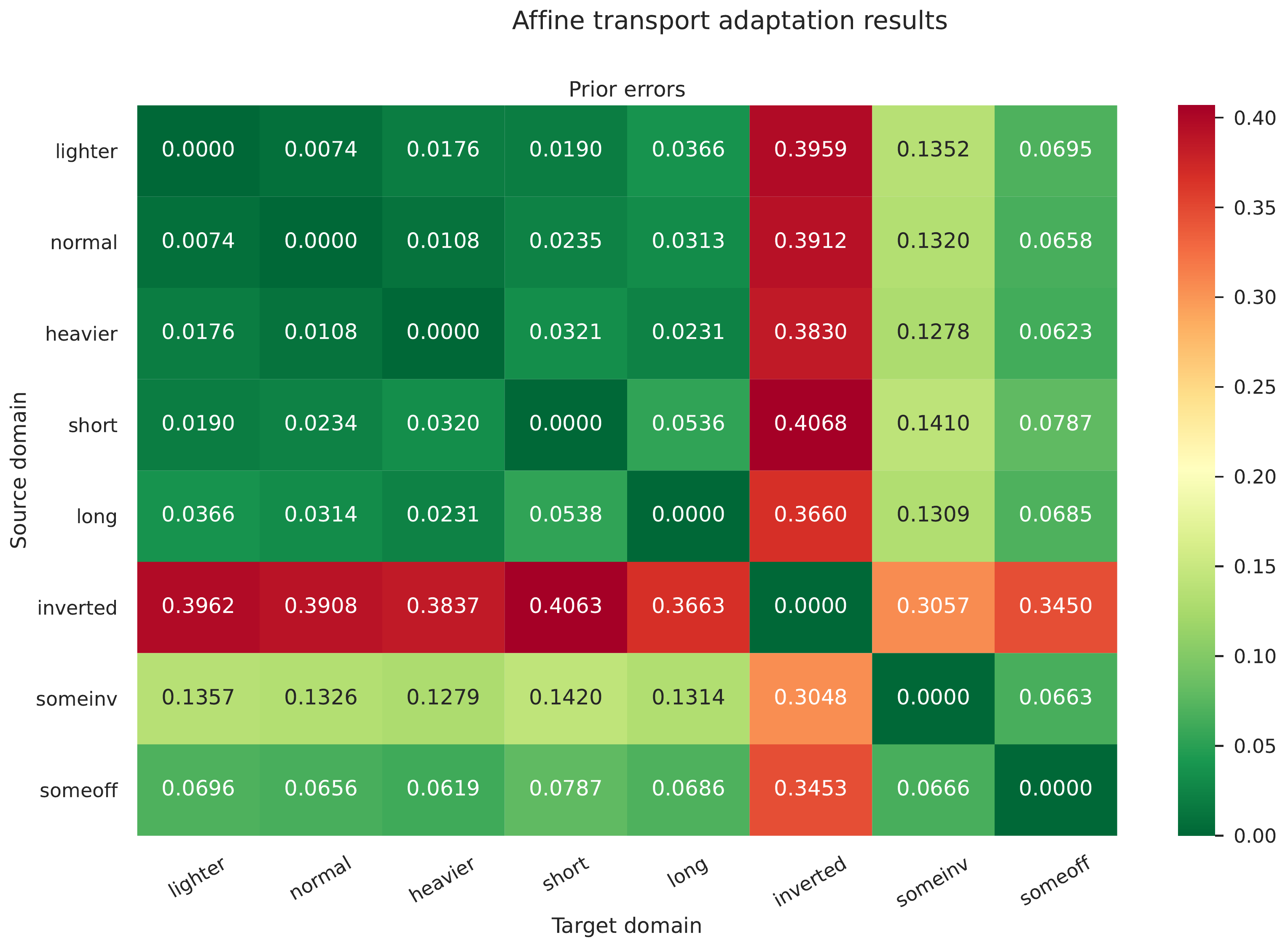}
        \caption{}
        \label{fig:hopper_prior}
    \end{subfigure}
    \begin{subfigure}{1.0\linewidth}
        \centering
        \includegraphics[width=.8\linewidth]{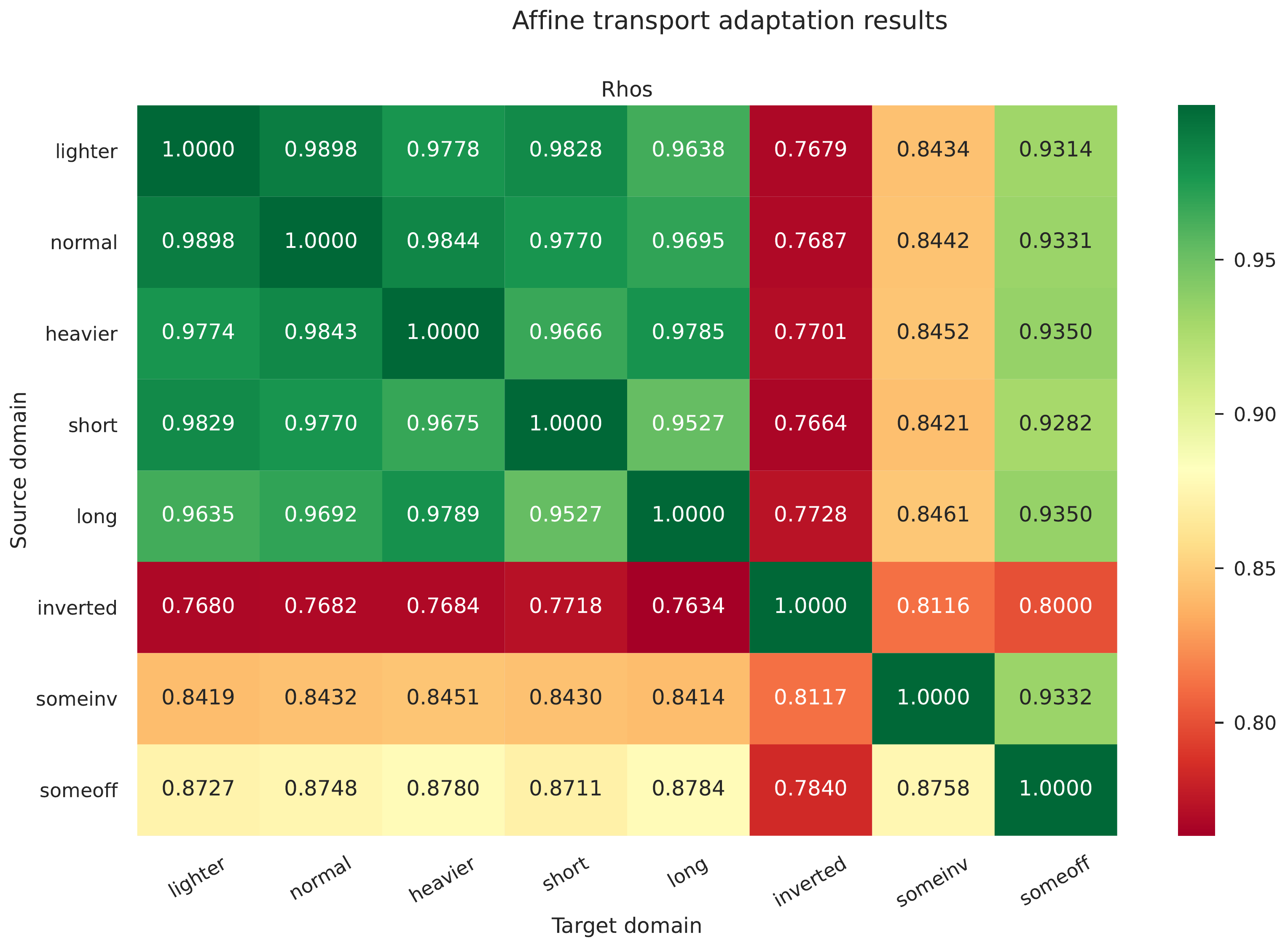}
        \caption{}
        \label{fig:hopper_rhos}
    \end{subfigure}
    \begin{subfigure}{1.0\linewidth}
        \centering
        \includegraphics[width=.8\linewidth]{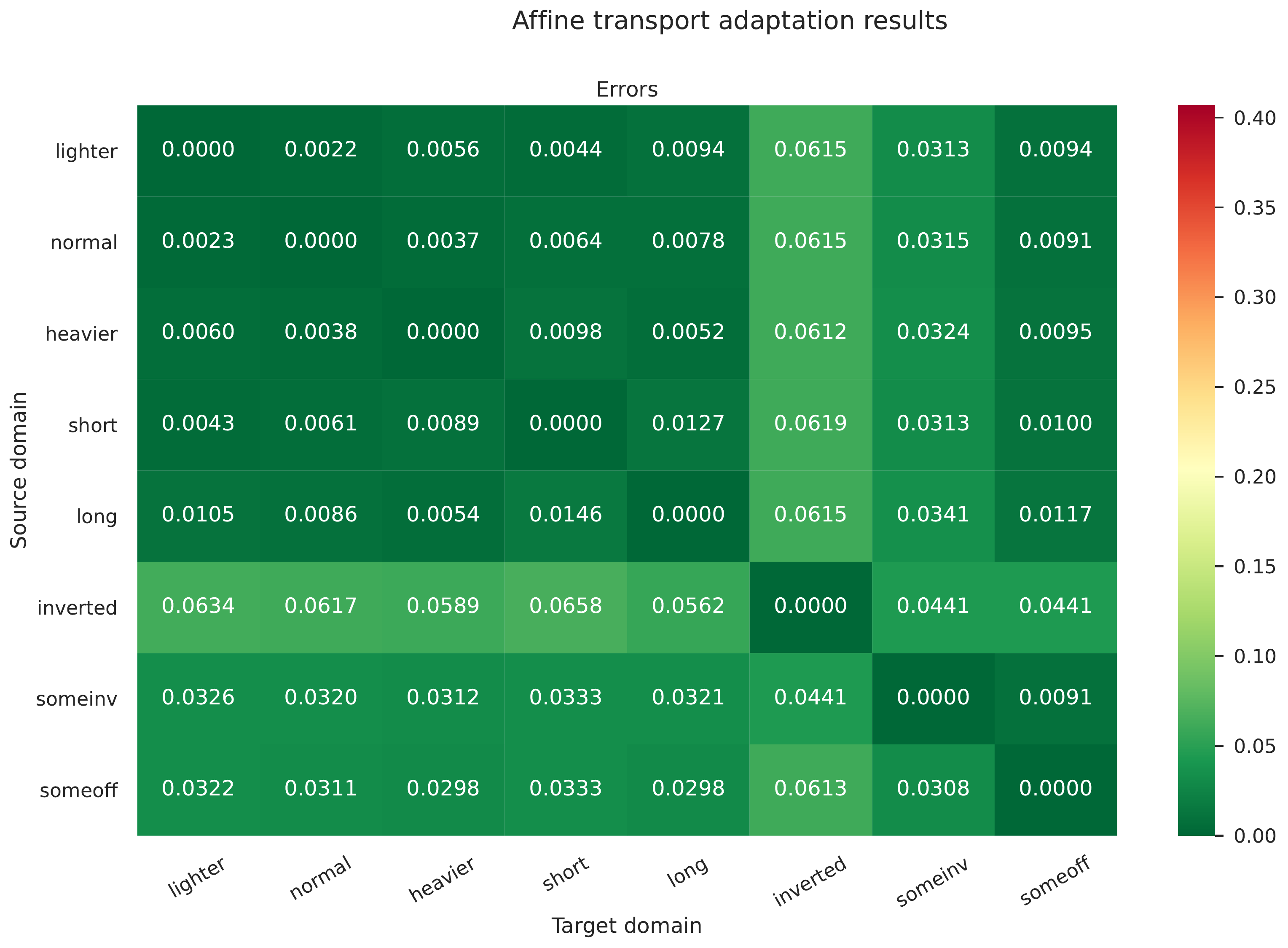}
        \caption{}
        \label{fig:hopper_errors}
    \end{subfigure}
    \caption{Results in the Hopper environment: before adaptation~(\subref{fig:hopper_prior}), the $\rho_{\aff}$ values ~(\subref{fig:hopper_rhos}), and the errors after adaptation~(\subref{fig:hopper_errors})}
    \label{fig:hopper_res}
\end{figure}

\subsection{Hockey Puck}
We experiment with the data describing a robot hand hitting a puck with a stick, trying to reach a specific point with the puck. We have simulated data $X_\simu$ and real data $X_\real$ consisting of $4$-tuples $(x,y,z_0,z_1)$, where $(x,y)$ is the end location of the puck and $(z_0,z_1)$ are latent variables encoding the action, which are the same for the simulated and real cases. In the real environment, we use two different pucks, a blue one and a red one. In the simulated case, we use pucks with $11$ different isotropies. In Table~\ref{tab:puck_sim2real} we present the prior $2$-Wasserstein distance between the simulated and real data, the $2$-Wasserstein distance between the affinely transferred data and real data, and the $\rho_\aff(X_\simu,X_\real)$ score. The transfer is illustrated in Figure~\ref{fig:transfer_example}.

In Figure~\ref{fig:learning_curve}, we vary the amount of real and simulated data points, sharing the same actions, used to learn the Sim2Real transport map $T_\aff$. 

\begin{table}
    \centering
    \begin{tabular}{lccc}
    \toprule
     Blue Puck &Sim2Real Error & Sim Error & $\rho_\aff$  \\
    \midrule
    isotr\_low          &    $0.08 \pm 0.07$ &    $0.72 \pm 0.37$ &    0.93\\
    isotr\_lowmed       &    $0.08 \pm 0.07$ &    $0.19 \pm 0.15$ &    0.93\\
    isotr\_med          &    $0.08 \pm 0.07$ &    $0.17 \pm 0.12$ &    0.93\\
    isotr\_medhi        &    $0.09 \pm 0.08$ &    $0.28 \pm 0.19$ &    0.92\\
    isotr\_high         &    $0.10 \pm 0.08$ &    $0.33 \pm 0.21$ &    0.92\\
    isotr\_low\_offc    &    $0.08 \pm 0.07$ &    $0.73 \pm 0.43$ &    0.93\\
    isotr\_high\_offc   &    $0.15 \pm 0.10$ &    $0.46 \pm 0.26$ &    0.88\\
    heavy\_anisotr\_lowx&    $0.09 \pm 0.07$ &    $0.36 \pm 0.20$ &    0.92\\
    heavy\_anisotr\_lowy&    $0.09 \pm 0.08$ &    $0.33 \pm 0.20$ &    0.92\\
    low\_anisotr\_lowx  &    $0.08 \pm 0.07$ &    $0.16 \pm 0.11$ &    0.93\\
    heavy\_anisotr\_lowy&    $0.08 \pm 0.08$ &    $0.18 \pm 0.16$ &    0.93\\
    \toprule
    Red Puck &Sim2Real Error & Sim Error & $\rho_\aff$  \\
    \midrule
    isotr\_low          &    $0.07 \pm 0.06$ &    $0.69 \pm 0.39$ &    0.92\\
    isotr\_lowmed       &    $0.07 \pm 0.06$ &    $0.18 \pm 0.17$ &    0.93\\
    isotr\_med          &    $0.07 \pm 0.06$ &    $0.18 \pm 0.12$ &    0.93\\
    isotr\_medhi        &    $0.08 \pm 0.06$ &    $0.29 \pm 0.16$ &    0.93\\
    isotr\_high         &    $0.08 \pm 0.07$ &    $0.35 \pm 0.19$ &    0.92\\
    isotr\_low\_offc    &    $0.07 \pm 0.06$ &    $0.69 \pm 0.43$ &    0.92\\
    isotr\_high\_offc   &    $0.10 \pm 0.07$ &    $0.48 \pm 0.22$ &    0.91\\
    heavy\_anisotr\_lowx&    $0.08 \pm 0.06$ &    $0.36 \pm 0.21$ &    0.92\\
    heavy\_anisotr\_lowy&    $0.07 \pm 0.06$ &    $0.33 \pm 0.19$ &    0.93\\
    low\_anisotr\_lowx  &    $0.07 \pm 0.06$ &    $0.18 \pm 0.11$ &    0.93\\
    heavy\_anisotr\_lowy&    $0.07 \pm 0.06$ &    $0.19 \pm 0.15$ &    0.93\\
    \bottomrule
    \end{tabular}
    \caption{Pointwise error for the predicted state for Sim2Real and Simulation, as well as the estimated affinity score $\rho_\aff(X,Y)$ between the simulated $X$ and real $Y$ datasets.}
    \label{tab:puck_sim2real}
\end{table}

\begin{figure*}
    \centering
    \includegraphics[width=\linewidth]{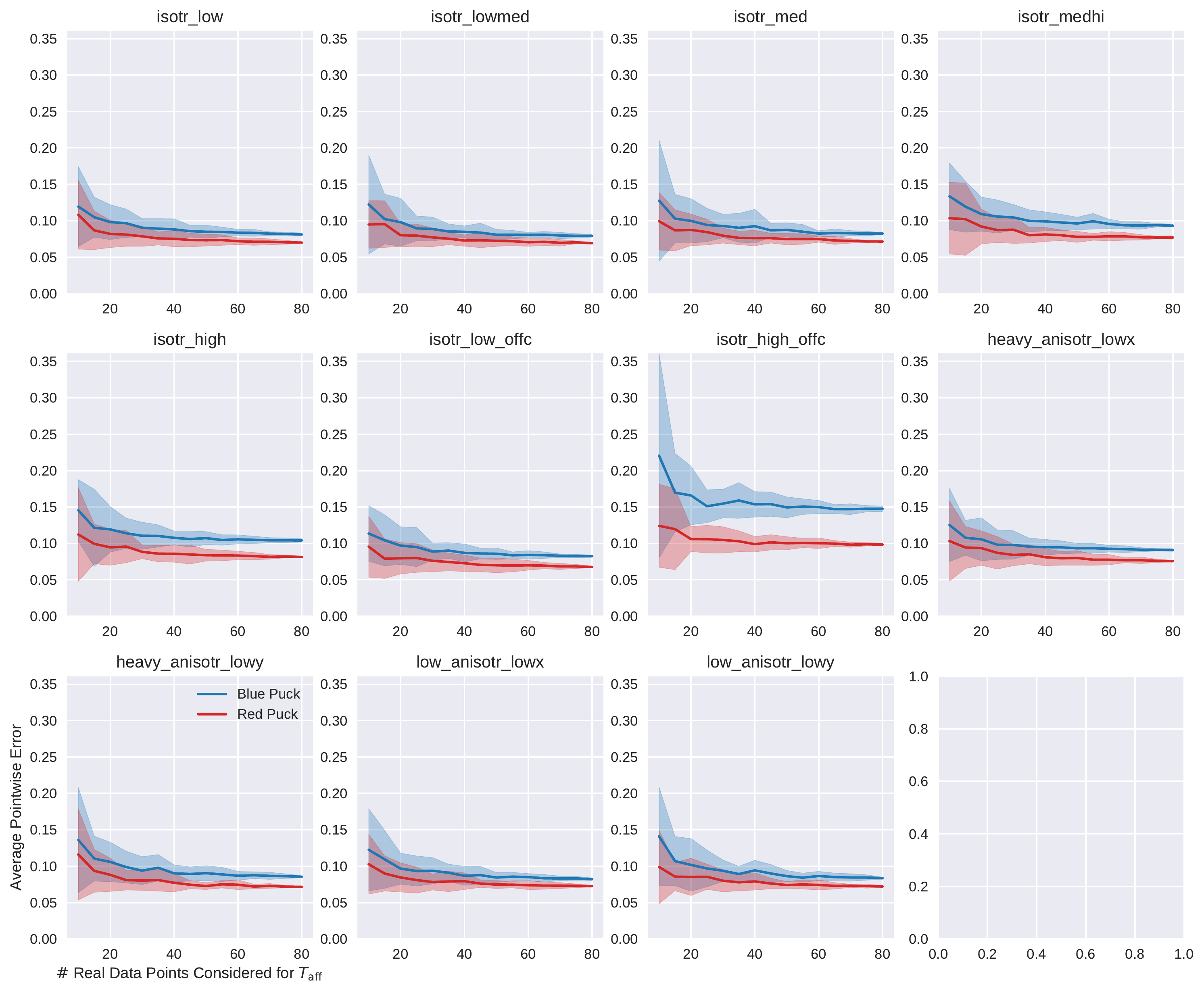}
    \caption{Pointwise transfer error of AT for different simulations as we vary the amount of real data points used to compute the AT map. The shaded area represent the  $2\sigma$ deviations.}
    \label{fig:learning_curve}
\end{figure*}

\section{Conclusion}
In this paper, we introduced affine transport and showed that it can be used to improve dynamics model performance through domain adaptation in robotics tasks, both in simulation and from simulation to reality.
The method works well in low-dimensional state and action spaces, such as in case of hockey puck; it, however, struggles to generalize to new data in higher dimensional spaces, such as in the case of Hopper.
Overall, the method can serve as a baseline for further research in domain adaptation through optimal transport.

While the method successfully adapts the dynamic model, the local character of the fit makes it potentially problematic for use with reinforcement learning. 
In that scenario, a transport map fit in on an offline dataset, a potential distribution shift could pull the reinforcement learning algorithm towards solutions which may look good under under the model, but only so because these transition were not observed in the adaptation dataset.
The simplest solution would be to update the transport map online; this, however, would attempt to fit a global affine model, which may be too simple to capture the true relations between source and target domains in all but the simplest cases.
More intricate solution would attempt to prevent distribution shift from occuring in the first place; in that case, methods studied in  the context of offline reinforcement learning could prove to be useful~\cite{yu2020mopo,  levine2020offline}.

\section*{Acknowledgment}
This work was supported by the Academy of Finland (Flagship programme: Finnish Center for Artificial Intelligence FCAI and grants 294238, 319264, 292334, 334600, 328400). We acknowledge the computational resources provided by Aalto Science-IT project.

{\small
\bibliographystyle{ieeetr}
\bibliography{references}
}

\end{document}